\definecolor{ballblue}{rgb}{0.13, 0.67, 0.8}
\definecolor{babypink}{rgb}{0.96, 0.76, 0.76}
\definecolor{antiquefuchsia}{rgb}{0.57, 0.36, 0.51}
\definecolor{blue(pigment)}{rgb}{0.2, 0.2, 0.6}
\definecolor{blush}{rgb}{0.87, 0.36, 0.51}
\DeclareMathOperator*{\argmax}{arg\,max}
\DeclareMathOperator*{\argmin}{arg\,min}
\newtheorem{theorem}{Theorem}
\newtheorem{assumption}{Assumption}
\newtheorem{lemma}{Lemma}
\newcommand{\rev}[1]{\textcolor{black}{#1}}
\renewcommand{\paragraph}{%
  \@startsection{paragraph}{4}%
  {\z@}{.4ex \@plus 0ex \@minus .2ex}{-1em}%
  {\normalfont\normalsize\bfseries}%
}
\title{\bf \Large Combinatorial Multi-armed Bandits: \\
Arm Selection via Group Testing}
\author{Arpan Mukherjee \footnote{Department of Electrical, Computer, and Systems Engineering, 
      Rensselaer Polytechnic Institute.}
      \and
       Shashanka Ubaru \footnote{IBM T.J. Watson Research Center.}
      \and Keerthiram Murugesan \footnotemark[2] 
      \and Karthikeyan Shanmugam \footnotemark[2]  \and 
     Ali Tajer \footnotemark[1]
      }
\date{}
\begin{document}

\maketitle

\begin{abstract}
This paper considers the problem of combinatorial multi-armed bandits with semi-bandit feedback and a cardinality constraint on the super-arm size. Existing algorithms for solving this problem typically involve two key sub-routines: (1)~a {\em parameter estimation} routine that sequentially estimates a set of base-arm parameters, and (2)~a {\em super-arm selection} policy for selecting a subset of base arms deemed optimal based on these parameters. State-of-the-art algorithms assume access to an {\em exact} oracle for super-arm selection with unbounded computational power. At each instance, this oracle evaluates a list of score functions, the number of which grows as low as linearly and as high as exponentially with the number of arms. This can be prohibitive in the regime of a large number of arms. This paper introduces a novel realistic alternative to the perfect oracle. This algorithm uses a combination of {\em group-testing} for selecting the super arms and \emph{quantized} Thompson sampling for parameter estimation. Under a general separability assumption on the reward function, the proposed algorithm reduces the complexity of the super-arm-selection oracle to be \emph{logarithmic} in the number of base arms while achieving the same regret order as the state-of-the-art algorithms that use exact oracles. This translates to  \emph{at least an exponential} reduction in complexity compared to the oracle-based approaches.
\end{abstract}



\section{Introduction}
\vspace{-.05 in}
The combinatorial multi-armed bandit (CMAB) problem is a generalization of the stochastic multi-armed bandit problem, in which there is a set of {\em base} arms and a learner selects a {\em subset} of them at each round. Such sets of base arms are called a {\em super-arms}, and the set of all possible super-arms constitutes the action space of the learner~\citep{chen2013combinatorial, combes2015combinatorial,chen2016combinatorial,chen2016combinatorial1}.

\paragraph{Bandit versus semi-bandit feedback.} CMABs can be broadly divided into two settings according to the level of feedback a learner receives in response to its actions: the {\em bandit} and the {\em semi-bandit} feedback settings. In the bandit feedback setting, the learner pulls a super-arm and observes one aggregate reward value generated by the selected super-arm~\cite {nie2022explore,jia2019towards}. On the other hand, in the semi-bandit feedback setting, in addition to the aggregate reward, the learner has access to a set of stochastic observations generated by the individual arms that constitute the selected super arm ~\citep{chen2016combinatorial,wang2022thompson}. This paper focuses on semi-bandit feedback, and its objective is to minimize the average cumulative regret in CMABs under this feedback model. The CMAB model is assumed to belong to the class of Bernoulli bandits.

\paragraph{UCB versus Thompson sampling.} Regret minimization algorithms for CMABs with semi-bandit feedback consist of two key sub-routines: an estimation routine and a super-arm selection routine. The estimation routine aims to form reliable estimates of the unknown parameters of the base arms. The super-arm selection routine specifies the sequential selection of the super-arms over time. Super-arm selections rely on the estimates formed by the estimation routine, and there is a wide range of arm-selection rules based on the upper confidence bound (UCB) principle~\citep{chen2013combinatorial,kveton2015tight,combes2015combinatorial,chen2016combinatorial} or Thompson sampling (TS)~\citep{wang2022thompson,perrault2021statistical}. Recent studies demonstrate that the TS-based approaches are more efficient and empirically outperform the UCB-based counterparts. Specifically, the combinatorial Thompson sampling (CTS) algorithm in~\citep{wang2022thompson} adopts a posterior sampling estimator for the bandit mean values, and uses an oracle that perfectly determines the set of super-arms that are optimal for the estimated means. Under such access to an {\em exact} oracle, the studies  in~\citep{wang2022thompson}~and~\citep{perrault2021statistical} establish that the CTS algorithm achieves an order-wise optimal regret of $O\big(\frac{m}{\Delta}\log T\big)$, where $m$ denotes the number of base arms, $T$ is the horizon, and $\Delta$ specifies the minimum expected reward gap between an optimal super-arm and any other non-optimal super-arm.

\paragraph{Oracle complexity.} Accessing an exact oracle is often computationally prohibitive. 
In this paper, our objective is to alleviate the {\em oracle complexity} of existing methods. This is motivated by the fact that black-box function evaluations can be expensive, and hence, it is imperative to minimize the number of black-box queries to the oracle. It is noteworthy that there exist \emph{approximate} alternatives to the exact oracle, which require a polynomial complexity in the number of base arms. While offering an improvement in complexity, polynomial complexity can still be excessive, and more importantly, approximate solutions can result in \emph{linear} cumulative regret. 
Examples of reward functions facing such issues include submodular reward functions~\citep{krause2014submodular} and reward functions modeled as the output of a neural network~\citep{hwang2023combinatorial}. Therefore, to avoid linear regret, CTS has to inevitably rely on an exact oracle, the computational complexity of which, in general, grows exponentially with the number of base arms.

\paragraph{Group testing.} Group testing (GT) is an efficient approach for solving large-scale combinatorial search problems~\citep{dorfman1943detection,du2000combinatorial}. The basic premise in GT is that a small sub-population (size $K$) of a large body (size $m\gg K$) has a certain property (e.g., being defective), and the objective of group testing is to identify them without individually testing all. To avoid individual tests, the population members are \emph{pooled} into groups, and the group is tested as a whole. The majority of tests are expected to return negative results, i.e., most groups do not have a member with the desired property. This clears the entire group, significantly saving the number of tests administered.

The number of tests required for identifying the defective items broadly varies depending on the settings (we refer to~\citep{CIT-099} for a review). Under both noiseless as well as noisy test outcomes (with a bound on the number of noisy measurements), when $K=O(m^\alpha)$ where $\alpha\in(0,1/3)$, only $O(K^2\log m)$ tests are sufficient to recover the defective subset perfectly (zero-error criterion)~\citep{hwang1987non,du2000combinatorial,chan2011non}. Under a vanishing error criterion, the number of tests can be reduced to  $O(K\log m)$~\citep{zhigljavsky2003probabilistic,gilbert2012recovering} (partial recovery). Furthermore, GT schemes can be classified into adaptive and non-adaptive methods. In non-adaptive group testing, all the tests are decided at once. In contrast, in adaptive group testing, the tests are divided into stages, and the tests for a particular stage are decided based on the outcomes of the previous stage. Adaptive group testing has been shown to significantly reduce the number of tests, requiring only $O(K\log m + m)$ tests for exact recovery~\citep{hwang1975generalized,de2005optimal}.

Different variants of group testing have also been proposed in the literature~\citep{du2000combinatorial,du2006pooling,d2014lectures}. These include threshold group testing~\citep{damaschke2006threshold}, where a test result is positive if the number of defective items in the pool is above a threshold; quantitative or additive group testing~\citep{d2014lectures,du2000combinatorial}, where the test output is the number of defective items in the pool; probabilistic group testing~\citep{cheraghchi2011group}, where we wish to recover the defective items with high probability; graph-constrained group testing~\citep{sihag2021adaptive}, where there are constraints how items can be grouped; and semi-quantitative group testing~\citep{emad2014semiquantitative,cheraghchi2021semiquantitative}, where the (additive) test outputs are quantized into a fixed set of thresholds. GT has been also adopted to solve large-scale learning problems, such as feature selection~\citep{zhou2014parallel}, extreme classification~\citep{ubaru2017multilabel,ubaru2020multilabel}, and data valuation~\citep{jia2019towards}. 

\paragraph{Contributions.} In this paper, we leverage GT to dispense with the assumption of exact oracle access for the CTS algorithm. This results in an \emph{exponential reduction} in the oracle complexity without compromising the achievable regret. Specifically, we devise the 
called  {\bf G}roup {\bf T}esting + {\bf Q}uantized {\bf T}hompson {\bf S}ampling (GT+QTS) algorithm, which 
under a \rev{mild probabilistic assumption on the separability} of the reward function (Assumption~\ref{assumption: C separability}), 
will have exponentially lower complexity compared to an exact oracle. GT+QTS has two key innovations compared to the existing algorithms. First, the complexity reduction is enabled by GT, the success of which fundamentally relies on separability assumptions, lacking which we may face sub-optimal (linear) regret. To address this, as a second contribution, we devise a {\em quantization} scheme that ensures the probabilistic separability of the reward function. We show that the GT-based oracle requires only $O(\log m)$ 
black-box queries to discern the optimal set of arms in each round. Furthermore, we show that the GT+QTS algorithm preserves the optimal regret order of $O\left(\frac{m}{\Delta}\log T\right)$ while providing an exponential reduction in the oracle complexity.

\paragraph{Related works.}
We provide an overview of the most closely related studies to the scope of this paper.  
The theoretical analysis of the TS-based approaches for MABs was first provided in~\citep{kaufmann2012thompson,agrawal2012analysis}. These results were later improved in~\citep{agrawal2013further} and extended to a general action space and feedback in~\citep{gopalan2014thompson}. The CMAB problem is studied under different settings in~\citep{chen2013combinatorial,combes2015combinatorial,chen2016combinatorial1,chen2016combinatorial}.
The TS-based approach to CMAB is investigated for top-$K$ CMAB in~\citep{komiyama2015optimal}, analyzed for contextual CMAB in~\citep{wen2015efficient}, and studied under Bayesian regret metric by~\cite{russo2016information}. Furthermore, CMAB has been investigated in the full-bandit feedback setting in~\cite{nie2022explore}.

The study closest to the scope of this paper is~\cite{wang2022thompson}, which analyzes the CTS algorithm to solve combinatorial semi-bandits under a Bernoulli model and a Beta prior distribution for the belief parameters. It establishes that the CTS algorithm asymptotically achieves the optimal regret
Another related study is by~\cite{perrault2021statistical}, which presents a tighter regret bound for the Beta prior, and a similar optimal regret analysis is established for multivariate sub-Gaussian outcomes using Gaussian priors.


\section{Combinatorial Bandits}
\label{sec:combbandit}

\textbf{Setting.} Similarly to the canonical models in~\citep{wang2022thompson,perrault2021statistical}, we consider a CMAB setting with $m$ arms, and define the set $[m]:=\{1,\cdots,m\}$. Each arm $i\in[m]$ is associated with an independent Bernoulli distribution with an \emph{unknown} mean $\mu_i$. We denote the vector of {\em unknown} mean values by $\bmu:=[\mu_1,\cdots,\mu_m]$. Sequentially over time, the learner selects subsets of arms, which we refer to as \emph{super-arms}. The super-arm selected at time $t$ is denoted by $\mcS(t)\in \mcI$, where $\mcI\subset 2^{[m]}$ specifies the set of {\em permissible} super-arms. We consider the semi-bandit feedback model, wherein, at each time $t$, upon pulling a super-arm $\mcS(t)$, the learner observes a feedback 
\begin{align}
    Q(t):=\{X_i(t):i\in\mcS(t)\}\ ,
\end{align}
where $X_i(t)$ denotes a random observation from arm $i\in[m]$, i.e., $X_i(t)\sim\rm{Bern}(\mu_i)$. In addition to the feedback $Q(t)$, based on the super-arm selected at time $t$, the learner gains a reward  $R(t)$.
The average reward $\E[R(t)]$ is assumed to depend only on the mean values of the arms $i\in\mcS(t)$. To formalize this, we assume that there exists a function $r:\mcI\times[0,1]^m\mapsto\R$, such that 
\begin{align}
\E[R(t)] = r(\mcS(t) \;;\; \bmu  )\ ,    
\end{align}
where the expectation is with respect to the measure induced by the distributions of arms $i\in\mcS(t)$. Function $r$ is assumed to be \emph{unknown}, and the learner only has {\em black-box} access to it, i.e., for any $\btheta\in[0,1]^m$ and $\mcS\in\mcI$, the learner queries the black-box and obtains the reward evaluation $r(\mcS\;;\;\btheta)$. For any $\btheta\in[0,1]^m$, we define the optimal super-arm associated with $\btheta$ as the permissible set with the largest reward, i.e., 
\begin{align}
\label{eq:S_star}
    \mcS^\star(\btheta)\;:=\;\argmax\limits_{\mcS\in\mcI} \; r(\mcS\;;\;\btheta)\ .
\end{align}
If there are multiple optimal super-arms, we randomly select one of them. We also assume that the cardinality of the optimal set $\mcS^\star(\bmu) = K$.
For a given $\btheta$ and any set $\mcS\in\mcI$, we define the sub-optimality with respect to $\mcS^\star(\btheta)$ by 
\begin{align}
    \Delta(\mcS,\btheta)\;:=\;r(\mcS^\star(\btheta)\;;\;\btheta) - r(\mcS\;;\;\btheta)\ .
\end{align}
Accordingly, we define the minimal and maximal sub-optimality gaps for any parameter $\btheta\in[0,1]^m$ as 
\begin{align}
    \Delta_{\min}(\btheta)\; &:=\; \min\limits_{\mcS\in\mcI : \Delta(\mcS,\btheta)>0}\; \Delta(\mcS,\btheta)\ ,\\
    \Delta_{\max}(\btheta)\; &:=\; \max\limits_{\mcS\in\mcI}\; \Delta(\mcS,\btheta)\ .
\end{align}
The learner's objective is to minimize the {\em average} cumulative regret $\mathfrak{R}(T)$, which is defined as
\begin{align}
\label{eq:regret}
    \mathfrak{R}(T)\;:=\;\sum\limits_{t=1}^T\E[\Delta(\mcS(t),\bmu)]\ ,
\end{align}
where the expectation is taken with respect to the measure induced by the interaction of the learner with the bandit instance. For any set $\mcS\subseteq [m]$ and $\btheta\in[0,1]^m$, we define $\btheta_{\mcS}$  as the vector, whose entries are equal to $\btheta$ for every $i\in\mcS$, and $0$ otherwise.

\paragraph{Assumptions.} We start by discussing some of the commonly used assumptions in the CMAB literature on the reward function $r$~\citep{wang2022thompson,perrault2021statistical}. Then, we will discuss how to relax some of the idealized assumptions in the literature. Specifically, the existing studies relevant to this work assume access to an exact oracle that can perfectly solve the problem in~\eqref{eq:S_star}, i.e., identifies the optimal super-arm $\mcS^\star(\btheta)$ for any parameter $\btheta\in[0,1]^m$. In this paper, we relax this assumption and replace the oracle with a procedure with only soft (probabilistic) guarantees for solving~\eqref{eq:S_star}.
We start with the following common assumption in the CTS-based approaches for CMAB; see~\citep {wang2022thompson,perrault2021statistical}. 
\begin{assumption}
\label{assumption: mean only}
The expected reward of a super-arm $\mcS \in \mcI$ depends only on the mean values of the base arms in $\mcS$. 
\end{assumption}
We note that some studies on the confidence interval-based methods have relaxed this assumption~\citep{chen2016combinatorial}. In the context of CTS, relaxing this assumption poses a few technical challenges. Specifically, a TS-based approach at each step samples the super-arm that maximizes the reward function based on posterior {\em mean} estimates. However, for rewards, which depend on the arm distributions (and not just the mean values), we need estimates for the distributions. This calls for a separate algorithm design.  Our next assumption quantifies the smoothness of the reward function.
\begin{assumption}[Lipschitz continuity]
\label{assumption:Lipschitz}
  The reward function is globally $B$-Lipschitz in $\btheta$. More specifically, for any $\mcS\in\mcI$ and for any $\btheta,\btheta^\prime\in[0,1]^m$,  the reward function satisfies $|r(\mcS\;;\;\btheta) - r(\mcS\;;\;\btheta^\prime)| \leq B||\btheta_{\mcS} -\btheta^\prime_{\mcS}||_1$ for some universal constant $B\in\R_+$.
\end{assumption}
Next, we specify assumptions on the variations of the reward function with respect to $\mcS$. We adopt a common monotonicity assumption based on which adding arms to any super-arm will not decrease the reward. 
\begin{assumption}[Reward monotonicity]
\label{assumption: monotone}
    The reward function $r$ is monotone and increasing in $\mcS$, i.e., for any $\mcS_1\subseteq\mcS_2\subseteq [m]$  we have $r(\mcS_1\;;\;\btheta) \leq r(\mcS_2\;;\;\btheta), \forall \btheta\in[0,1]^m$.
\end{assumption}
Without any constraint on the cardinality of the optimal set, the monotonicity assumption implies that the optimal solution super-arm is $[m]$. To avoid this, we impose that the cardinality of the optimal super-arm $|\mcS| \leq K\in[m]$.
Besides the above standard CMAB assumptions, we also adopt three more assumptions pertinent to dispensing with access to the exact oracle that solves~\eqref{eq:S_star} and designing an efficient probabilistic alternative. The following two assumptions are needed for determining the number of tests in our GT procedure.
\begin{assumption}[Bounded reward]
\label{assumption: bounded reward}
    For any set $\mcS\in\mcI$ and any $\btheta\in[0,1]^m$, we assume that the reward function satisfies $r(\mcS\;;\;\btheta)\in[0,M]$, where $M$ is known.
\end{assumption}
\rev{Next, we introduce a probabilistic assumption on the distribution of the minimum gap of the bandit instances. This assumption is critical in order to facilitate an exponential reduction in the oracle complexity. Furthermore, this assumption covers the case where a lower bound on the minimum gap of the class of instances is known/assumed to be known, which is a common occurrence in many applications such as principal component analysis (minimum singular value gap requirement for iterative partial SVD algorithms~\cite{musco2015randomized}), topological data analysis (minimum gap requirement for Betti number estimation~\cite{apers2023simple}), and others.
\begin{assumption}
\label{assumption:Delta_min}
     The probability of distribution of the minimum gap $\Delta_{\min}(\bmu)$ is known, and its cumulative distribution function (CDF) is denoted by $\mathbb{F}_{\bmu}$.
\end{assumption}}
The next assumption states that augmenting any subset of arms with an optimal arm results in higher reward gain than augmenting with a non-optimal arm. 
\begin{assumption}[Separable reward]
\label{assumption: C separability}
    For any parameter $\btheta\in[0,1]^m$, any optimal arm $s\in\mcS^\star(\btheta)$, any sub-optimal arm $\tilde{s}\notin \mcS^\star(\btheta)$, and any set $\mcS\subset [m]\setminus\{s,\tilde{s}\}$, we have\footnote{In the case of multiple optimal super-arms, $s$ belongs to the union of optimal arms, and $\tilde{s}$ does not belong to it.}:
    \begin{align}
        r(\mcS\cup\{s\}\;;\;\btheta) - r(\mcS\cup\{\tilde{s}\}\;;\;\btheta) > 0\ .
    \end{align}
\end{assumption}
Several commonly used set-valued functions naturally satisfy the separability assumption, e.g., linear rewards, i.e., $r(\mcS\;;\;\btheta) = \sum_{i\in\mcS} \theta_i$, information measure-based functions such as mutual information and $f$-divergence~\citep{zhou2014parallel,nguyen2010estimating}. Furthermore, we show that a two-layer neural network (NN) also satisfies the separability assumption (see Theorem~\ref{theorem: ANN}, Appendix~\ref{proof: ANN}).
\section{Algorithm: GT + Quantized TS }
\label{sec:algo}

In this section, we provide the details of the GT+QTS algorithm, the objective of which is minimizing the average cumulative regret defined in~\eqref{eq:regret}. 
This algorithm has two central sub-routines. The first sub-routine is an estimation process that computes estimates for the base arm means. The second sub-routine is a procedure that sequentially, at each round, determines an optimal super-arm to be pulled based on the current base arms mean estimates. These procedures are discussed next.

\subsection{TS-based Estimator}
We consider a TS-based approach, where the estimates of the mean values are generated by sampling from a posterior distribution. We adopt a beta distribution  to generate the posteriors. A beta posterior naturally comes up as the conjugate distribution assuming uniform priors on the mean values of the base arms. We denote the distribution associated with arm $i\in[m]$ at time $t$ by ${\sf Beta}(a_i(t),b_i(t))$. We initialize, for $t=0$, $a_i(0)=b_i(0)=1$ for all arms, in which case the beta distribution reduces to a uniform distribution. 
Subsequently, for each time $t\in\N$, a super-arm $\mcS(t)$ is selected, and we receive the feedback $Q(t)$. Based on the feedback, we update the prior distribution of each base arm by updating $a_i(t)$ and $b_i(t)$. Furthermore, recall that $X_i(t)$ denotes the feedback from the base arm $i\in\mcS(t)$. We draw a sample $Y_i(t)\sim {\sf Bern}(X_i(t))$, and update the posterior distribution as follows.
\begin{align}
\label{eq: posterior1}
    a_i(t+1)\; &=\; a_i(t) + Y_i(t)\ ,\\
\label{eq: posterior2}
    b_i(t+1) \; &= \; b_i(t) - Y_i(t)+1 \ .
\end{align}
Finally, our estimate for $\bmu$ at time $t$ is a random sample from the beta distribution with parameters specified in~\eqref{eq: posterior1}-\eqref{eq: posterior2}, i.e., we generate the posterior estimate $\btheta(t+1)$ according to $\theta_i(t+1)\sim {\sf Beta}(a_i(t+1),b_i(t+1))$.

\subsection{GT-based Arm Selection}
We design a GT-based procedure to select the optimal super-arm in each round. The nature of this procedure is probabilistic, and  it is designed to find the optimal super-arm with a high probability. 

GT involves pooling together several arms and performing a test on the pooled set. Tests are repeated by selecting and pooling different subsets of arms for each test. When we have $\ell$ tests, the pooling process can be characterized by a test matrix $\bA\in\{0,1\}^{\ell\times m}$, where row $j\in[\ell]$ specifies the arms that are included in test $j$. Specifically, $A_{j,i}=1$ if the arm $i\in[m]$ is contained in test $j\in[\ell]$ test, and otherwise $A_{j,i}=0$. For each test $j\in[\ell]$, we design a function $\rho_j : 2^{[m]}\times [0,1]^m \mapsto [0,M]$, that assigns score to the outcome of test $j$. Next, based on these test scores, we assign a grade to each arm that specifies whether the arm is likely to be in the optimal super-arm or not. This grade assignment is formalized by a decoding mechanism specified by the function $\phi_i : [0,M]^{\ell} \mapsto \R$, which generates the arms' grades. Subsequently, a candidate super-arm is selected as the set of arms with the top $K$  grades.

\begin{figure}[t]
    \centering
        \includegraphics[width=0.35\textwidth,trim={1cm 0cm  1cm  0cm },clip]{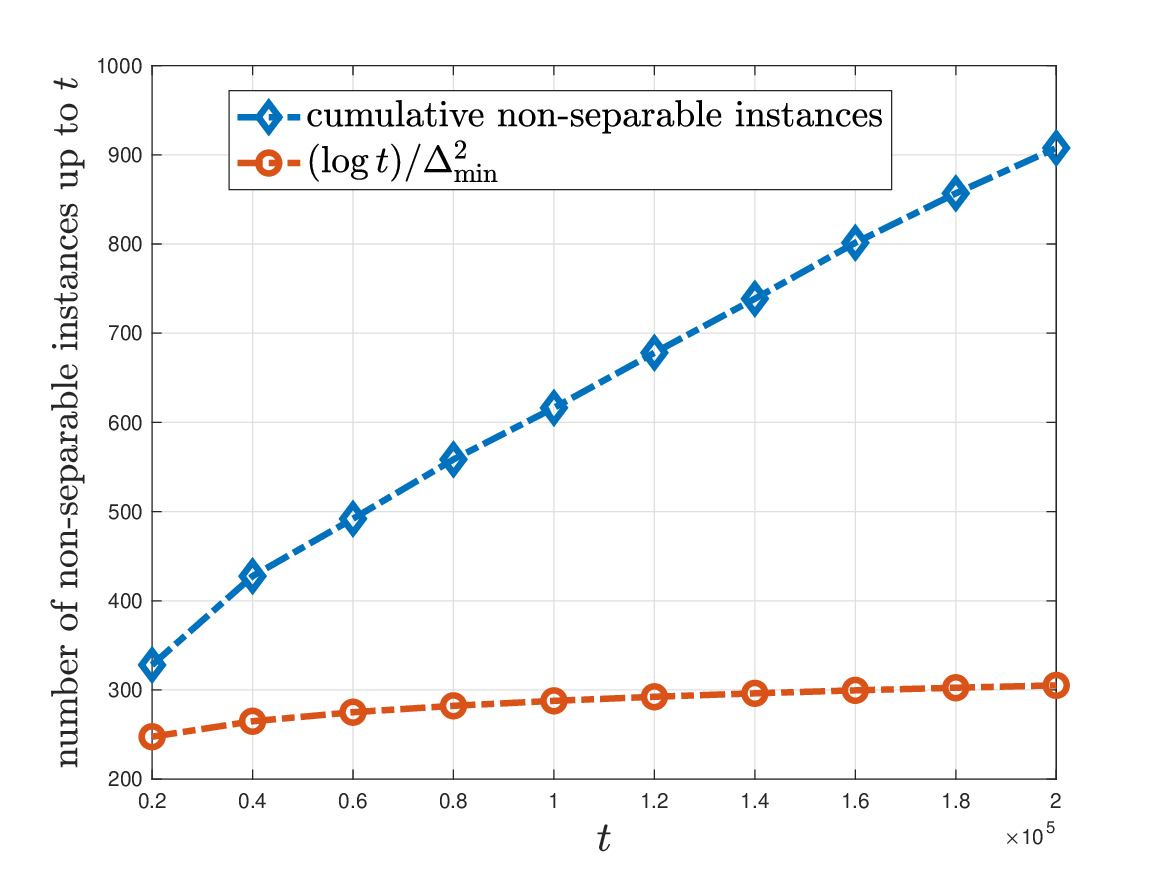}
        \caption{Cumulative number of times that the instance $\btheta(t)$ is non-separable.}
        \label{fig:1}
\end{figure}

\paragraph{Group-testing oracle (GTO).} Next, we describe our GT encoding and decoding mechanisms. To lay context, we first describe a naïve adaptation of the GT approach in~\citep{zhou2014parallel}. It was designed for ranking, and can be used to find the optimal super-arm at each step. We then describe a shortcoming of this naïve approach and modify it to replace the exact oracle used by CTS.

\textit{Naïve GT approach.} We adopt a randomized testing mechanism, in which each arm $j\in[m]$ is included in the test by flipping a coin. Specifically, arm $i\in[m]$ is included in test $j\in[\ell]$ based on a Bernoulli random variable $A_{j,i}\sim{\sf Bern}(p)$ such that arm $i$ is included in the test if $A_{j,i}=1$. Probability $p$ is a design parameter to be chosen later. For designing the decoder, at each time $t\in\N$, we set the scoring function of test $j$, i.e., $\rho_j$, to be an evaluation of the average reward function at the current estimates of the mean values, i.e., $\rho_j(t) := r(\bA_j\;;\;\btheta(t))$, where $\bA_j$ denotes the $j^{\sf th}$ row of the test matrix $\bA$. Based on the test scores $\boldsymbol{\rho}(t) := (\rho_1(t),\cdots,\rho_{\ell}(t))$, we define the arm grading function $\bphi(t) := (\phi_1(t),\cdots,\phi_m(t))$ as follows.
\begin{align}
\label{eq:decoder}
    \bphi(t)\; = \; \bA^{\top}\brho(t)\ .
\end{align}
For each arm $i\in[m]$, the GT decoder in~\eqref{eq:decoder} considers the tests $j\in[\ell]$ which contain $i$, and adds up the scores due to these tests to form an aggregate grade for each arm $i$. If an arm $i\in[m]$ is contained in multiple tests with high scores, and the resulting aggregate score is large, it is highly likely that the arm $i$ is responsible for the high scores assigned to the tests. Hence, arm $i$ is a more likely candidate to be one of the arms in the optimal super-arm. Hence, the arms with the top-$K$ grades are selected as candidates for the optimal super-arm to be pulled at time $t$.

\textit{Separability.} The naïve GT mechanism faces a delicate shortcoming; for the GT to work, the reward function $r(\cdot;\btheta(t))$ must satisfy a $C$-separability assumption, which is stronger than Assumption~\ref{assumption: C separability}. Specifically, under $C$-separability,
any two arms $s\in\mcS^\star(\btheta(t))$ and $\tilde{s}\notin \mcS^\star(\btheta(t))$, and any set $\mcS\in[m]\setminus\{s,\tilde{s}\}$ must satisfy
\begin{align}
\label{eq:C_sep}
    r(\mcS\cup\{s\}\;;\;\btheta(t)) - r(\mcS\cup\{\tilde{s}\}\;;\;\btheta(t))\;\geq\;C\ .
\end{align}
Based on $C$-separability, the number of tests required for identifying $\mcS^\star(\btheta(t))$
will then be inversely proportional to $C^2$~\citep{zhou2014parallel}. However, it is impossible to ensure $C$-separability for the function $r(\cdot\;;\;\btheta(t))$ at round $t\in\N$, even when the reward function $r(\cdot\;;\;\bmu)$ at the true mean $\bmu$ is $C$-separable. We empirically show that the cumulative number of non-separable instances increases with time $t$. Figure~\ref{fig:1}, for any $t$, shows the number of times the reward function evaluated at $s\leq t$ is non-separable. Here, by ``non-separable'', we mean that the reward difference is smaller than $C$, i.e.,  $r(\mcS\cup\{s\}\;;\;\btheta(t)) - r(\mcS\cup\{\tilde{s}\}\;;\;\btheta(t))\leq C$, where $C$ is the minimum separability at the true mean. Furthermore, in~Figure~\ref{fig:1} we plot the function $\frac{1}{\Delta_{\min}^2(\bmu)}\log(t)$ and observe that the cumulative number of non-separable instances grows faster than $\frac{1}{\Delta_{\min}^2(\bmu)}\log(t)$, which is not desirable, as it can result in sub-optimal regret. 

\begin{algorithm}[!tb]
			\SetAlgoLined
			\LinesNumbered
			\SetKwInOut{Input}{Input}
			\Input{Cardinality constraint $K$, $\#$ rounds $T$}
			 
			\textit{Initialize}  $t=1$, $a_i(t) = 1$, $b_i(t) = 1$ for all $i\in[m]$
			
			\For{$t = 1 \ldots T$}{
			
			Draw a sample $\theta_i(t)\sim{\sf Beta}(a_t(t),b_i(t))$ for every arm $i\in[m]$, and form $\btheta(t)$
			
			{Play the super-arm} $\mcS(t)$ returned by ${\rm Oracle}(\btheta(t))$ (Algorithm~\ref{alg:GTO})
			
			Obtain the observations $Q(t)$

            Update $a_i(t+1)$ \& $b_i(t+1)$ according to~\eqref{eq: posterior1} ~\eqref{eq: posterior2}
		}
			\caption{GT+QTS Algorithm}
			\label{alg:CMAB-GT}
		\end{algorithm}
		
		\begin{algorithm}[!tb]
			\SetAlgoLined
			\LinesNumbered
			\SetKwInOut{Input}{Input}
			\Input{Parameter $\btheta$, quatization level $\Delta$, cardinality $K$, parameter $p$}
			 
			\textit{Initialize} $\#$ tests $\ell = O(\frac{1}{\hat q^2(t)\Delta^2}\log m)$,  Test matrix $\bA\in\{0,1\}^{\ell\times m}$ such that $A_{i,j} \sim{\sf Bern}(p)$

            \For{$j = 1 \ldots \ell$}{

                Evaluate the average reward function $r(\bA_j,\btheta)$ at the input $\btheta$

                Assign the quantized score $Q(r(\bA_j\;;\;\btheta))$ to the test $j$ according to~\eqref{eq: quantizer}

            }

            Evaluate the grading function using the decoding matrix $\bA$ according to~\eqref{eq:decoder}

            \SetKwInOut{Output}{Output}\Output{$\mcS(t):$ arms having the top-$K$ grades}
   
			\caption{${\rm Oracle}(\btheta)$}
			\label{alg:GTO}
		\end{algorithm}

\paragraph{Quantization.} To circumvent the non-separability of the reward function at the posterior means, we use  \emph{quantized rewards} as the test scores for GTO. Specifically, we use a uniform quantizer to discretize the reward values. This quantizer splits the interval $[0,M]$ into equal sub-intervals of width $\Delta/2B$, \rev{where the quantization level $\Delta$ will be specified in Section~\ref{sec:results} to ensure sublinear regret}\footnote{If $2BM/\Delta$ is not an integer, we absorb the missing fraction in the last interval, making it shorter than the preceding ones.}. Based on this, we split the reward range into $L = \lceil 2BM/\Delta\rceil$ intervals, where each interval $k\in[L-1]$ is defined as 
\begin{align}
    I_j  &:= \left (\frac{(i-1)\Delta}{2B},\frac{i\Delta}{2B}\right ]\ ,\quad  k<L\ ,\\
    I_K  &:= \left (\frac{(L-1)\Delta}{2B},M\right ]\ .
\end{align}
Furthermore, we denote the set of quantization levels by $\mcL := \{\Delta/2B, \cdots, M\}$. Accordingly, for any $\btheta\in[0,1]^m$ and $\mcS\in\mcI$, the uniform quantizer $Q : [0,B]\mapsto \mcL$ is specified by 
\begin{align}
\label{eq: quantizer}
    Q(r(\mcS\;;\;\btheta))\; := \;\argmin_{\ell\in\mcL}\; |r(\mcS\;;\;\btheta) - \ell|\ .
\end{align}

\paragraph{Decoding.} Note that quantization alone does not guarantee the separability of the reward defined in~\eqref{eq:C_sep} evaluated at {\em every} test. The reason is that the reward evaluations for the sets $\mcS\cup{\{s\}}$ and $\mcS\cup\{\tilde{s}\}$ may be mapped to the same quantization level, even though we have $r(\mcS\cup{\{s\}}\;;\;\btheta(t))>r(\mcS\cup\{\tilde{s}\}\;;\;\btheta(t))$ by Assumption~\ref{assumption: C separability}. Accordingly, at each round $t$, let us denote the set of {\em unique} (or non-repeated) test scores by $\mcI_{\sf nr}(t)$. Specifically, for any pair of distinct tests $\mcS,\mcS^\prime\in\mcI_{\sf nr}(t)$ such that $|\mcS|=|\mcS^\prime|$, it satisfies that $Q(r(\mcS\;;\;\btheta(t)))\neq Q(r(\mcS^\prime\;;\;\btheta(t)))$. In the decoding step, we leverage the fact that our quantization scheme enables us to sufficiently distinguish the tests contained in $\mcI_{\sf nr}(t)$.

\paragraph{Arm selection.} Let us denote the subset of arms obtained under a test matrix $\bA$ and the scoring function $\rho$ by ${\rm GTO}(\bA,\rho)$. At each round $t$, the GT+QTS algorithm uses the quantized average reward function $Q(r(\bA_i,\btheta(t)))$ as the scoring function for each test $i\in[\ell]$. Subsequently, the set of arms to be chosen at time $t$ is set to $\mcS(t) := {\rm GTO}(\bA,Q(r(\cdot,\btheta(t-1))))$. The entire algorithm is presented in Algorithm~\ref{alg:CMAB-GT}.    

\section{Main Results: Efficiency and Regret}
\label{sec:results}

In this section, we present the performance guarantees of the proposed GT+QTS algorithm. Specifically, we investigate two key performance metrics of the algorithm: (1) the efficiency of the GTO measured in terms of the number of reward evaluations required in each step, and (2) the average cumulative regret incurred by the GT+QTS algorithm. We show that the GT+QTS algorithm achieves the same order-wise regret guarantee as the combinatorial Thompson sampling using an exact oracle~\citep{wang2022thompson,perrault2021statistical}, while exponentially reducing the number of reward function evaluations. We begin with the results on the efficiency of the GTO. 

\paragraph{Efficiency.} A naïve approach to finding the optimal super-arm in each round is to evaluate the functional value at every subset in $\mcI$ at the current estimate of $\btheta(t)$ at time $t$. However, this approach requires an exponential number of reward evaluations. An exact oracle may not require an exponential number of evaluations, owing to the separability in Assumption~\ref{assumption: C separability}. We will first describe a baseline approach, called $\rm{Oracle}_+$, that provides an {\em exact} solution leveraging Assumption~\ref{assumption: C separability}, with the reward function evaluations scaling linearly with respect to the number of base arms. Subsequently, we will show that the GTO described in Section~\ref{sec:algo} finds the optimal arm with a high probability using only $O(\log m)$ function evaluations, exponentially reducing the complexity compared to the baseline approach. 

\paragraph{Oracle$_+$:} The baseline approach is a direct consequence of Assumption~\ref{assumption: C separability}. Since the separability assumption is valid for {\em any} subset $\mcS$, for any parameter $\btheta\in[0,1]^m$ we may set $\mcS = \emptyset$. By this choice, for any $s\in\mcS^\star(\btheta)$ and $\tilde{s}\notin \mcS^\star(\btheta)$, Assumption~\ref{assumption: C separability} implies that
\begin{align}
\label{eq:baseline}
    r(s\;;\;\btheta)\; > r(\tilde{s}\;;\;\btheta)\ .
\end{align}
${\rm Oracle}_+$ makes $m$ reward evaluations, each test comprising of a single base arm. It then selects the top $K$ arms with the largest reward values. As a consequence of~\eqref{eq:baseline}, we immediately conclude that this set of base arms selected by ${\rm Oracle}_+$ is indeed the optimal super-arm $\mcS^\star(\btheta)$. Therefore, ${\rm Oracle}_+$ requires $m$ (linear) reward function evaluations. Next, we analyze the number of tests required by the GTO. 

\paragraph{GTO:} For any separable function, the number of tests required by the GTO is of the order $O(\log m)$, where the constants depend on the \rev{quantization level $\Delta$}, the set $\mcI_{\sf nr}(t)$, as well as the test matrix parameter $p$. For characterizing the number of tests required by GTO, let us denote the probability of the set of non-repeated test scores by
\begin{align}
    q(t):=\P(\mcI_{\sf nr}(t))\ .
\end{align}
The following lemma formalizes the number of tests the GTO requires to compute the optimal super-arm at each time $t\in\N$. 
\begin{lemma}
\label{lemma: number of tests}
    For any $\delta\in(0,1)$, 
    \begin{align}
    \label{eq: number of tests}
        \ell\;=\; \frac{8M^2B^2}{\Delta^2p^4(1-p)^2q^2(t)}\log \left ( \frac{K(m-K)}{\delta}\right )
    \end{align}
    tests are sufficient for the GTO to identify an optimal super-arm in each round with probability at least $1-\delta$.
\end{lemma}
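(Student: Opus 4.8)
The plan is to work conditionally on the posterior sample $\btheta(t)$, so that the quantization levels, the collection $\mcI_{\sf nr}(t)$ of non-repeated test scores, and the probability $q(t)$ are all deterministic, and the only randomness is the test matrix $\bA$. Unfolding the decoder~\eqref{eq:decoder} coordinate-wise, the grade of arm $i\in[m]$ is $\phi_i(t)=\sum_{j=1}^{\ell}A_{j,i}\,\rho_j(t)$, where $\rho_j(t)=Q\!\left(r(\bA_j\,;\,\btheta(t))\right)\in[0,M]$ by Assumption~\ref{assumption: bounded reward}. Although within a single test $j$ the summand $A_{j,i}\rho_j(t)$ depends on the whole row $\bA_j$, the rows of $\bA$ are independent, so for fixed $i$ the $\ell$ terms defining $\phi_i(t)$ are independent. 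Since the oracle outputs the $K$ arms of largest grade, it returns $\mcS^\star(\btheta(t))$ exactly whenever $\phi_s(t)>\phi_{\tilde s}(t)$ for every optimal arm $s\in\mcS^\star(\btheta(t))$ and every sub-optimal arm $\tilde s\notin\mcS^\star(\btheta(t))$; it thus suffices to control the probability of this event, and by a union bound over the $K(m-K)$ such pairs it suffices to bound, for one fixed pair $(s,\tilde s)$, the probability that $\phi_s(t)-\phi_{\tilde s}(t)\le 0$.

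First I would lower-bound the expected grade gap. Fixing $(s,\tilde s)$, I condition each test on the random inclusion pattern $\mcS_0\subseteq[m]\setminus\{s,\tilde s\}$ of the other arms (each present independently with probability $p$). Of the four configurations of $(A_{j,s},A_{j,\tilde s})$, the two ``balanced'' ones contribute $0$ to $(A_{j,s}-A_{j,\tilde s})\rho_j(t)$, while $\{A_{j,s}=1,A_{j,\tilde s}=0\}$ and $\{A_{j,s}=0,A_{j,\tilde s}=1\}$ each occur with probability $p(1-p)$, carry the same conditional law of $\mcS_0$, and contribute $+Q(r(\mcS_0\cup\{s\};\btheta(t)))$ and $-Q(r(\mcS_0\cup\{\tilde s\};\btheta(t)))$ respectively. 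Summing over the $\ell$ independent tests,
\begin{align}
\E\!\left[\phi_s(t)-\phi_{\tilde s}(t)\right]\;=\;\ell\,p(1-p)\;\E_{\mcS_0}\!\left[Q\!\big(r(\mcS_0\cup\{s\};\btheta(t))\big)-Q\!\big(r(\mcS_0\cup\{\tilde s\};\btheta(t))\big)\right].
\end{align}
By Assumption~\ref{assumption: C separability}, $r(\mcS_0\cup\{s\};\btheta(t))>r(\mcS_0\cup\{\tilde s\};\btheta(t))$ for every $\mcS_0$, so after the uniform quantizer~\eqref{eq: quantizer} the bracketed difference is always nonnegative and is at least one bin width $\Delta/2B$ on the event $\{\mcS_0\cup\{s\}\in\mcI_{\sf nr}(t)\}$ — because then $\mcS_0\cup\{s\}$ has a non-repeated quantized score, which in particular must differ from that of the equal-cardinality set $\mcS_0\cup\{\tilde s\}$, and by the strict inequality above it can only be larger. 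Hence
\begin{align}
\E\!\left[\phi_s(t)-\phi_{\tilde s}(t)\right]\;\ge\;\ell\,p(1-p)\,\frac{\Delta}{2B}\;\P_{\mcS_0}\!\big(\mcS_0\cup\{s\}\in\mcI_{\sf nr}(t)\big)\;\ge\;\ell\,p^2(1-p)\,\frac{\Delta}{2B}\,q(t),
\end{align}
where the final step relates the probability that a test forced to contain $s$ and exclude $\tilde s$ lands in $\mcI_{\sf nr}(t)$ to the unconditional quantity $q(t)=\P(\mcI_{\sf nr}(t))$, at the cost of an extra factor $p$.

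Then I would conclude by concentration. For fixed $(s,\tilde s)$, $\phi_s(t)-\phi_{\tilde s}(t)=\sum_{j=1}^{\ell}(A_{j,s}-A_{j,\tilde s})\rho_j(t)$ is a sum of $\ell$ independent terms supported on $[-M,M]$, so the one-sided Hoeffding bound gives $\P(\phi_s(t)-\phi_{\tilde s}(t)\le 0)\le\exp\!\big(-(\E[\phi_s(t)-\phi_{\tilde s}(t)])^2/(2\ell M^2)\big)$. Substituting the mean-gap lower bound yields a per-pair failure probability at most $\exp\!\big(-\ell\,p^4(1-p)^2\Delta^2 q^2(t)/(8B^2M^2)\big)$, and a union bound over the $K(m-K)$ pairs makes the overall failure probability at most $K(m-K)\exp\!\big(-\ell\,p^4(1-p)^2\Delta^2 q^2(t)/(8B^2M^2)\big)$; setting this equal to $\delta$ and solving for $\ell$ reproduces exactly~\eqref{eq: number of tests}.

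The main obstacle is the second step, i.e.\ certifying the expected-grade gap. Two sub-points need care: (i) the quantization argument, where one must verify that membership of $\mcS_0\cup\{s\}$ in $\mcI_{\sf nr}(t)$ genuinely forces a full-bin separation from $\mcS_0\cup\{\tilde s\}$, which uses that the two sets have equal cardinality and that Assumption~\ref{assumption: C separability} gives a strict (not merely weak) ordering; and (ii) the passage $\P_{\mcS_0}(\mcS_0\cup\{s\}\in\mcI_{\sf nr}(t))\ge p\,q(t)$, which is where the factor $p^4(1-p)^2$ in the denominator of~\eqref{eq: number of tests} originates. Everything else — boundedness of $\rho_j(t)$, independence across tests, Hoeffding's inequality, and the union bound over optimal/sub-optimal pairs — is routine.
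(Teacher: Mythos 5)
Your proposal is correct and follows essentially the same route as the paper's proof: the same decomposition $\phi_s(t)-\phi_{\tilde s}(t)=\sum_j (A_{j,s}-A_{j,\tilde s})\rho_j(t)$, the same expected-gap computation that isolates the unbalanced configurations with weight $p(1-p)$, the same split over $\mcI_{\sf nr}(t)$ yielding the lower bound $\ell\,p^2(1-p)\frac{\Delta}{2B}q(t)$, and the same Hoeffding-plus-union-bound over the $K(m-K)$ pairs. The two sub-points you flag for care (the full-bin separation on $\mcI_{\sf nr}(t)$ and the passage to $p\,q(t)$) are exactly the steps the paper handles, if anything slightly less explicitly than you do.
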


From~\eqref{eq: number of tests}, we observe that the GTO requires $O(\log m)$ tests to identify the optimal super-arm in each round with probability at least $1-\delta$. Hence, in the regime of a large number of base arms, the GTO {\em significantly} reduces the number of reward evaluations required to find the optimal super-arm in each round. We also observe that the number of tests depends on $q(t)$ which is unknown. This can be resolved by adopting an estimator for estimating $q(t)$ based on the group tests. Let us define
\begin{align}
    \hat q(t)\;:=\;\frac{1}{\ell}\sum\limits_{j\in[\ell]} \mathds{1}\{\bA_j\in\mcI_{\sf nr}(t)\}\ .
\end{align}
We show that using $O(\frac{1}{\varepsilon^2}\log\frac{1}{\delta})$ samples, we have an $\varepsilon-$accurate estimate of $q(t)$ with a high probability, i.e., $\P(|\hat q(t) - q(t)|>\varepsilon)\leq \delta$. Both $\Delta$ and $q(t)$ capture the granularity of the reward function in identifying the optimal super-arm. \rev{We will show that $\Delta$ is chosen based on the CDF $\mathbb{F}_{\bmu}$ of the arm gaps $\Delta_{\min}(\bmu)$, which captures the gap between reward due to optimal arm and any other arm. Smaller the quantization level $\Delta$, more tests are required to find the optimal super-arm.} Similarly, if we face a reward function in which many tests get mapped to the same score, it is unlikely that we save much leveraging group testing.

\paragraph{Regret analysis.} Next, we characterize the regret of the GT+QTS algorithm. As a first step, we establish that our quantization scheme for super-arm selection does not compromise the achievable regret. In other words, the regret achieved after reward quantization is the same as the regret with unquantized rewards. For this, we introduce a few notations. Let $\mcT(\btheta)$ denote the {\em set} of \emph{all} optimal super-arms with respect to the parameter $\btheta$, i.e., $\mcS^\star(\btheta)\in\mcT(\btheta)$. Next, corresponding to the function $Q$ specified in~\eqref{eq: quantizer} we define
\begin{align}    \mcT_Q(\btheta)\;:=\;\argmax\limits_{\mcS\in\mcI} Q(r(\mcS\;;\;\btheta))\ .
\end{align}
We show that, \rev{with a high probability}, the set of optimal super-arms with respect to the quantized reward $\mcT_Q(\bmu)$ is contained in the set of optimal super-arms with respect to the true reward $\mcT(\bmu)$. This is necessary to achieve sublinear regret since we hope to converge to one of the optimal super-arms after quantization. 

\begin{lemma}
\label{lemma:optimal_quantized}
    \rev{For any $\gamma\in[0,1/2]$, let us set $\Delta := \mathbb{F}_{\bmu}^{-1}(\gamma)$. For the quantization scheme described in Section~\ref{sec:algo}, with probability at least $1-2\gamma$ we have $\mcT_Q(\bmu)\subseteq \mcT(\bmu)$.}
\end{lemma}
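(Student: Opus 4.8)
The probability $1-2\gamma$ is taken over the (prior) randomness of the instance $\bmu$, whereas ``$\mcT_Q(\bmu)\subseteq\mcT(\bmu)$'' is a deterministic property of a fixed $\bmu$. The plan is therefore to first isolate a purely deterministic sufficient condition for this containment, phrased in terms of the minimum gap $\Delta_{\min}(\bmu)$, and then bound the probability of that condition using the choice $\Delta=\mathbb{F}_{\bmu}^{-1}(\gamma)$ together with Assumption~\ref{assumption:Delta_min}.

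\emph{Deterministic step.} The structural fact I would exploit is that the nearest-level quantizer $Q$ in~\eqref{eq: quantizer} is monotone non-decreasing in its argument. Hence $Q(r(\mcS^\star(\bmu);\bmu))$, being the image of the maximal reward, is the maximal quantized reward, so $\mcS^\star(\bmu)\in\mcT_Q(\bmu)$ and $\mcT(\bmu)\cap\mcT_Q(\bmu)\neq\emptyset$. Consequently the containment can fail only if some $\mcS\in\mcI$ with $r(\mcS;\bmu)<r(\mcS^\star(\bmu);\bmu)$ still attains the maximal quantized value, which by monotonicity forces $Q(r(\mcS;\bmu))=Q(r(\mcS^\star(\bmu);\bmu))$ --- i.e.\ the quantizer \emph{merges} an optimal and a strictly suboptimal super-arm. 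I would then observe that two reward values mapped to a common element of $\mcL$ differ by at most the spacing $\Delta/(2B)$ of consecutive quantization levels (each lies within half a spacing of that common level), while the reward gap between $\mcS^\star(\bmu)$ and any strictly suboptimal $\mcS$ is at least $\Delta_{\min}(\bmu)$ by definition of the minimal gap. Combining these yields the deterministic inclusion $\{\mcT_Q(\bmu)\not\subseteq\mcT(\bmu)\}\subseteq\{\Delta_{\min}(\bmu)\le \Delta/(2B)\}$.

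\emph{Probabilistic step.} By Assumption~\ref{assumption:Delta_min}, $\Delta_{\min}(\bmu)$ has CDF $\mathbb{F}_{\bmu}$, and with $\Delta:=\mathbb{F}_{\bmu}^{-1}(\gamma)$ (the quantile $\inf\{x:\mathbb{F}_{\bmu}(x)\ge\gamma\}$) I would bound $\P(\Delta_{\min}(\bmu)\le\Delta/(2B))$ via the standard identity $\P(\Delta_{\min}(\bmu)<\mathbb{F}_{\bmu}^{-1}(\gamma))\le\gamma$ --- applicable since $\Delta/(2B)\le\mathbb{F}_{\bmu}^{-1}(\gamma)$ for the Lipschitz constants of the reward classes of interest (e.g.\ $B=1$ for linear rewards) --- and absorb the contribution of a possible atom of $\mathbb{F}_{\bmu}$ at the quantile, together with any slack in the constant, into a second $\gamma$; the hypothesis $\gamma\le 1/2$ is exactly what keeps $1-2\gamma$ a valid probability. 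Taking complements and combining with the deterministic step gives $\P(\mcT_Q(\bmu)\subseteq\mcT(\bmu))\ge 1-2\gamma$.

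I expect the main obstacle to be the merging argument in the deterministic step: making rigorous that a strictly suboptimal super-arm can enter $\mcT_Q(\bmu)$ \emph{only} by tying with $\mcS^\star(\bmu)$ at a quantization level, and verifying uniformly over all pairs of super-arms --- including across the possibly shorter last quantization bin and under whatever tie-breaking rule is used in the $\argmin$ of~\eqref{eq: quantizer} --- that a reward separation exceeding $\Delta/(2B)$ indeed precludes a merge. Once this is pinned down, the probabilistic step is essentially bookkeeping with the quantile function, and it is there that the factor $2\gamma$ (rather than the sharper $1-\gamma$) is spent.
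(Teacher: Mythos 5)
Your proposal is correct and reaches the same core reduction as the paper --- namely, that failure of the containment forces an optimal and a strictly suboptimal super-arm into the same quantization cell, hence $\Delta_{\min}(\bmu)\lesssim \Delta/(2B)$, an event whose probability is controlled by the quantile choice $\Delta=\mathbb{F}_{\bmu}^{-1}(\gamma)$ --- but you get there by a genuinely different and cleaner route. The paper spends its probability budget in two stages: it first shows $\P(\mcT_Q(\bmu)\cap\mcT(\bmu)=\emptyset)\leq\gamma$ by exhibiting a pair $\mcS\in\mcT(\bmu)$, $\mcS'\in\mcT_Q(\bmu)$ whose rewards must differ by at least $\Delta_{\min}(\bmu)$ yet whose quantized values are ordered the wrong way, and then, conditioning on a nonempty intersection, repeats essentially the same quantization-error argument to rule out a suboptimal member of $\mcT_Q(\bmu)$, paying a second $\gamma$ via a union bound. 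Your observation that the nearest-level quantizer is monotone, so that $\mcS^\star(\bmu)\in\mcT_Q(\bmu)$ holds \emph{deterministically}, makes the paper's entire first stage unnecessary and collapses the failure event into a single tail event on $\Delta_{\min}(\bmu)$; this is why you end up with a bound of essentially $1-\gamma$ and treat the second $\gamma$ as slack, whereas the paper genuinely needs both. The one caveat you correctly flag --- that the boundary quantization cells (the lowest bin and the possibly shortened last bin) have diameter larger than $\Delta/(2B)$, so the uniform error bound $\Delta/(4B)$ per evaluation is not quite right there --- is a constant-factor issue present in the paper's proof as well (its step from the gap inequality to $\Delta/(4B)+\Delta/(4B)\geq\Delta_{\min}(\bmu)$ has the same blind spot), so it does not distinguish the two arguments.
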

Next, we provide an upper bound on the average cumulative regret achievable by GT+QTS. We note that this is similar to the regret bound reported when there is access to an exact oracle for the CTS algorithm. 
\begin{theorem}[Achievable regret]
\label{theorem: regret upper bound}
    Under Assumptions~\ref{assumption: mean only}--\ref{assumption: C separability}, by setting $\delta = \frac{1}{t^2}$, \rev{and the quantization level $\Delta:= \mathbb{F}^{-1}_{\bmu}(\gamma)$ for any $\gamma\in[0,1/2]$, with probability at least $1-2\gamma$, the regret of the GT+QTS algorithm satisfies}
    {\small
    \begin{align}
        \mathfrak{R}(T)\; &\leq \;\sum\limits_{i\in[m]} \left ( 2\log K + 6\right )B^2\nonumber\\
        &\qquad\times\frac{\log(2^m|\mcI|T)}{\min\limits_{\mcS : i\in\mcS}\big (\Delta(\mcS,\bmu) - \frac{\Delta_{\min}(\bmu)}{2}-(K^2+2)B\varepsilon \big)}\nonumber\\
        &\qquad+\Bigg(13\alpha\frac{8}{\varepsilon^2}\bigg ( \frac{4}{\varepsilon^2} + 1\bigg )^K \log\frac{K}{\varepsilon^2} + \frac{\pi^2}{6}\nonumber\\
        &\qquad\qquad\qquad+m\bigg(\frac{K^2}{\varepsilon^2} + 1\bigg )\Bigg )\Delta_{\max}(\bmu)\ ,
        \label{eq: theorem1}
    \end{align}}
    where $\alpha\in\R_+$ is a constant, and $\varepsilon\in\R_+$ is chosen as
    \begin{align}
    \label{eq: varepsilon}
        \varepsilon\;<\;\frac{\Delta_{\min}(\bmu)}{4B(K^2 + 2)}\ .
    \end{align}
\end{theorem}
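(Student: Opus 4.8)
The plan is to adapt the regret analysis of combinatorial Thompson sampling with an exact oracle~\citep{wang2022thompson,perrault2021statistical} and to account for the three extra sources of error introduced by GT+QTS: (i)~the probabilistic GTO replacing the exact oracle, (ii)~the reward quantizer $Q$, and (iii)~the estimate $\hat q(t)$ of $q(t)$ used to set the number of tests. Each of these degrades the effective sub-optimality gap or contributes an additive term, and the bulk of the work is to show that none of them changes the $O(\log T)$ leading order.

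I would first condition on the clean event of Lemma~\ref{lemma:optimal_quantized}: with probability at least $1-2\gamma$ over the instance $\bmu$, the choice $\Delta=\mathbb{F}_{\bmu}^{-1}(\gamma)$ gives $\mcT_Q(\bmu)\subseteq\mcT(\bmu)$, and, tracing its proof, the quantizer preserves the strict ordering of any optimal versus sub-optimal super-arm whose true reward gap exceeds (essentially) $\Delta_{\min}(\bmu)/2$; hence in the quantized problem a sub-optimal $\mcS$ retains an effective gap of at least $\Delta(\mcS,\bmu)-\Delta_{\min}(\bmu)/2$. Conditioned on this event I treat $\bmu,\Delta_{\min}(\bmu),\Delta_{\max}(\bmu)$ as fixed and write $\mathfrak{R}(T)=\sum_{t=1}^{T}\E[\Delta(\mcS(t),\bmu)]$. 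Each round is split by two per-round bad events: $E_1(t)$ that the GTO, run with $\ell$ tests, fails to output a member of $\mcT_Q(\btheta(t))$, and $E_2(t)=\{|\hat q(t)-q(t)|>\varepsilon\}$. With $\delta=1/t^2$, Lemma~\ref{lemma: number of tests} gives $\P(E_1(t))\le 1/t^2$, so $\sum_t\Delta_{\max}(\bmu)\,\P(E_1(t))\le\frac{\pi^2}{6}\Delta_{\max}(\bmu)$; a Hoeffding bound on the $\ell$ indicators $\mathds{1}\{\bA_j\in\mcI_{\sf nr}(t)\}$, a lower bound on $q(t)$, and the $O(\varepsilon^{-2}\log\delta^{-1})$ sample requirement bound the number of rounds on which $\hat q(t)$ is not yet reliable, yielding the $13\alpha\frac{8}{\varepsilon^2}(4/\varepsilon^2+1)^{K}\log\frac{K}{\varepsilon^2}$ and $m(K^2/\varepsilon^2+1)$ additive terms ($K$ treated as a constant).

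On $\overline{E_1(t)}\cap\overline{E_2(t)}$ the played super-arm satisfies $\mcS(t)\in\mcT_Q(\btheta(t))$. Here I would combine $B$-Lipschitz continuity (Assumption~\ref{assumption:Lipschitz}), the quantization slack from Lemma~\ref{lemma:optimal_quantized}, and the $\varepsilon$-accuracy of $\hat q(t)$ — which, through the decoding/top-$K$ step, can only misrank arms whose marginal reward contributions differ by at most $(K^2+2)B\varepsilon$ — to show that whenever $\mcS(t)$ is sub-optimal for $\bmu$, some base arm $i\in\mcS(t)$ has a posterior sample $\theta_i(t)$ deviating from $\mu_i$ enough to create a reward discrepancy of at least $\Delta(\mcS(t),\bmu)-\Delta_{\min}(\bmu)/2-(K^2+2)B\varepsilon$, which is positive by~\eqref{eq: varepsilon}. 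From this point the analysis is the per-base-arm counting argument of~\citep{wang2022thompson}: using Beta-posterior concentration and a union bound over the $O(2^m|\mcI|T)$ relevant $(\text{arm},\text{super-arm},\text{round})$ events, the number of sub-optimal rounds charged to arm $i$ at each sub-optimality scale is bounded, and summing these contributions down to the effective gap above produces, for each $i$, a term $(2\log K+6)B^2\log(2^m|\mcI|T)\big/\min_{\mcS:i\in\mcS}\big(\Delta(\mcS,\bmu)-\Delta_{\min}(\bmu)/2-(K^2+2)B\varepsilon\big)$; summing over $i\in[m]$ and adding the additive terms gives~\eqref{eq: theorem1}.

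The main obstacle is the coupling between $\hat q(t)$ and the correctness of the GTO: the test count $\ell$ in Algorithm~\ref{alg:GTO} is computed from $\hat q(t)$, not $q(t)$, so one must verify that $\varepsilon$-accuracy of $\hat q(t)$ still makes $\ell$ large enough for Lemma~\ref{lemma: number of tests} to apply up to constants, while simultaneously the decoding step can only confuse arms whose marginal reward contributions are within $(K^2+2)B\varepsilon$ — it is exactly this slack, alongside the quantization slack $\Delta_{\min}(\bmu)/2$, that appears in the denominator of~\eqref{eq: theorem1} and dictates the admissible range~\eqref{eq: varepsilon} of $\varepsilon$. The remaining delicacy is essentially bookkeeping: arranging the various failure probabilities and burn-in counts so that they collapse into the stated constants ($\pi^2/6$, the $(4/\varepsilon^2+1)^K$ factor, and $m(K^2/\varepsilon^2+1)$).
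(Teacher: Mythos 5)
Your skeleton has the right outer shell --- condition on the event of Lemma~\ref{lemma:optimal_quantized}, charge the GTO failures $\sum_t 1/t^2$ to get the $\frac{\pi^2}{6}\Delta_{\max}(\bmu)$ term, and import the per-base-arm counting of~\citep{wang2022thompson} for the $\log T$ term --- but the mechanism you propose for the remaining pieces is not the one that works, and I do not think it can be made to work. The paper's proof is driven by a three-event decomposition that your sketch omits: $\mcA(t)=\{\mcS(t)\notin\mcT_Q(\bmu)\}$ is intersected with $\mcB(t)=\{\exists i\in\mcS(t):|\bar\mu_i(t)-\mu_i|>\varepsilon/|\mcS(t)|\}$ (inaccurate \emph{empirical means}) and with $\mcC(t)=\{\|\btheta_{\mcS(t)}(t)-\bmu_{\mcS(t)}\|_1>\frac{\Delta(\mcS(t),\bmu)}{B}-\frac{\Delta_{\min}(\bmu)}{2B}-(K^2+2)\varepsilon\}$ (inaccurate \emph{posterior samples}). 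The term $m(K^2/\varepsilon^2+1)\Delta_{\max}(\bmu)$ is exactly the count of rounds on which $\mcB(t)$ occurs (Lemma~1 of~\citep{wang2022thompson} applied at scale $\varepsilon/K$); the term $13\alpha\frac{8}{\varepsilon^2}(4/\varepsilon^2+1)^K\log\frac{K}{\varepsilon^2}$ is the count of rounds on which the posterior sample restricted to a subset $\mcZ$ of the optimal super-arm deviates from $\bmu_\mcZ$ in sup-norm by more than $\varepsilon$, obtained through the exchange argument (properties P1--P2 and Lemma~3 of~\citep{wang2022thompson}); and the $(K^2+2)B\varepsilon$ slack in the denominator is the slack built into the definition of $\mcC(t)$ so that $\overline{\mcB(t)}\cap\mcC(t)$ implies a deviation of the posterior sample from the \emph{empirical} mean. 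None of these three quantities has anything to do with the estimator $\hat q(t)$.

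Your proposal instead attributes all three to the event $\{|\hat q(t)-q(t)|>\varepsilon\}$ and to a claimed "decoding can only misrank arms within $(K^2+2)B\varepsilon$" property. That routing fails for a structural reason: the accuracy of $\hat q(t)$ only affects whether the number of tests $\ell$ is large enough for Lemma~\ref{lemma: number of tests} to hold, i.e., it folds entirely into the per-round GTO failure probability and hence into the $\frac{\pi^2}{6}\Delta_{\max}(\bmu)$-type term. A Hoeffding bound on $\hat q(t)$ is a per-round, $K$-free statement and cannot generate a count that is exponential in $K$; the $(4/\varepsilon^2+1)^K$ factor is the signature of the Thompson-sampling anti-concentration argument (the probability that all $K$ coordinates of the Beta posterior sample are simultaneously near the true means), which your sketch never invokes. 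Without the $\mcB/\mcC$ decomposition and the P1--P2 exchange lemma showing that $\mcA(t)\cap\overline{\mcC(t)}\cap\overline{\mcM(t)}$ forces $\|\btheta_\mcZ(t)-\bmu_\mcZ\|_\infty>\varepsilon$ for some nonempty $\mcZ\subseteq\mcS^\star_Q(\bmu)$ --- the step where the quantized reward $Q(r(\cdot;\cdot))$ must be compared against the unquantized one, costing the $\Delta_{\min}(\bmu)/(4B)$ quantization slack --- the third regret term is unaccounted for and the proof does not close.
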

The regret bound in Theorem~\ref{theorem: regret upper bound} matches that of the CTS algorithm with an exact oracle~\citep{wang2022thompson} order-wise, i.e., both have the same regret bound of $O(m\Delta^{-1}_{\min}(\bmu)\log K \log T)$, despite GT+QTS requiring exponentially fewer reward function evaluations.  We note that the regret remains linear in $m$. The numerator of the first term in the summand, i.e., $\log(2^m|\mcI|T)$, can be decomposed into two parts. The first part is $m\log(2|\mcI|)$, and the second part is $\log T$. The first part depends on $T$ through $\log T$, but it is {\bf independent of $m$}, and second part is {\bf independent of $T$} but depends on $m$ linearly.
Since the second part is independent of $T$ it does not contribute to the regret, and therefore the regret will be specified only by the first term. In other words, after summing both terms $m$ times, we get the total regret of $m\log T+m^2$, which is $O(m\log T)$. Comparing the bound in Theorem~\ref{theorem: regret upper bound} to that of~\citet[Theorem 1]{wang2022thompson}, we observe that GTO only adds a constant term of $\frac{\pi^2}{6}\Delta_{\max}(\bmu)$ to the regret bound.

\begin{proof}
We provide an overview of the key steps and defer the details of the proof to Appendix~\ref{proof: regret upper bound}. From the definition of regret in~\eqref{eq:regret}, with probability at least $1-2\gamma$ we have
    \begin{align}\nonumber
    \mathfrak{R}(T)\; &= \;\sum\limits_{t=1}^T \E[\mathds{1}\{\mcS(t)\notin\mcT(\bmu)\}\times\Delta(\mcS(t),\bmu)]\\
    \label{eq:regret_lemma_quant}
    &\leq\;\sum\limits_{t=1}^T \E[\mathds{1}\{\mcS(t)\notin\mcT_Q(\bmu)\}\times\Delta(\mcS(t),\bmu)]\ ,
\end{align}
where~\eqref{eq:regret_lemma_quant} is a result of Lemma~\ref{lemma:optimal_quantized}. Next, 
we decompose the upper bound on the regret in~\eqref{eq:regret_lemma_quant} based on three events. The first event captures the time instances at which we select a sub-optimal super-arm owing to inaccurate sample mean estimates. The second event considers time instances at which the sample mean is close to the true mean, and yet we select a sub-optimal super-arm since our posterior mean has a considerable deviation from the true mean. Finally, the third event considers the instances where the posterior mean of the {\em selected} super-arm is close to the true mean, and yet, we select a sub-optimal super-arm. The key challenge arises in upper-bounding the regret due to this third event. Specifically, the proof for this term in~\citet{wang2022thompson} relies on assuming access to an {\em exact} oracle -- an assumption that we have dispensed with. We  show that GTO is sufficient to guarantee constant regret for the third set of events.
\end{proof}


\section{Experiments}
\label{sec:exp}

In this section, we provide empirical results to assess the performance of GT+QTS and compare it against the state-of-the-art CTS algorithm provided in~\citep{wang2022thompson} equipped with ${\rm Oracle}_+$ described in Section~\ref{sec:results} as the exact oracle. In the first experiment, we consider the case of linear rewards. Subsequently, we consider the mean reward to be the output of a $2$-layer artificial neural network.

\begin{figure*}[tb]
    \centering
\begin{tabular}{ccc}
    \includegraphics[width=0.31\textwidth,trim={1cm 0cm  1cm  0cm },clip]{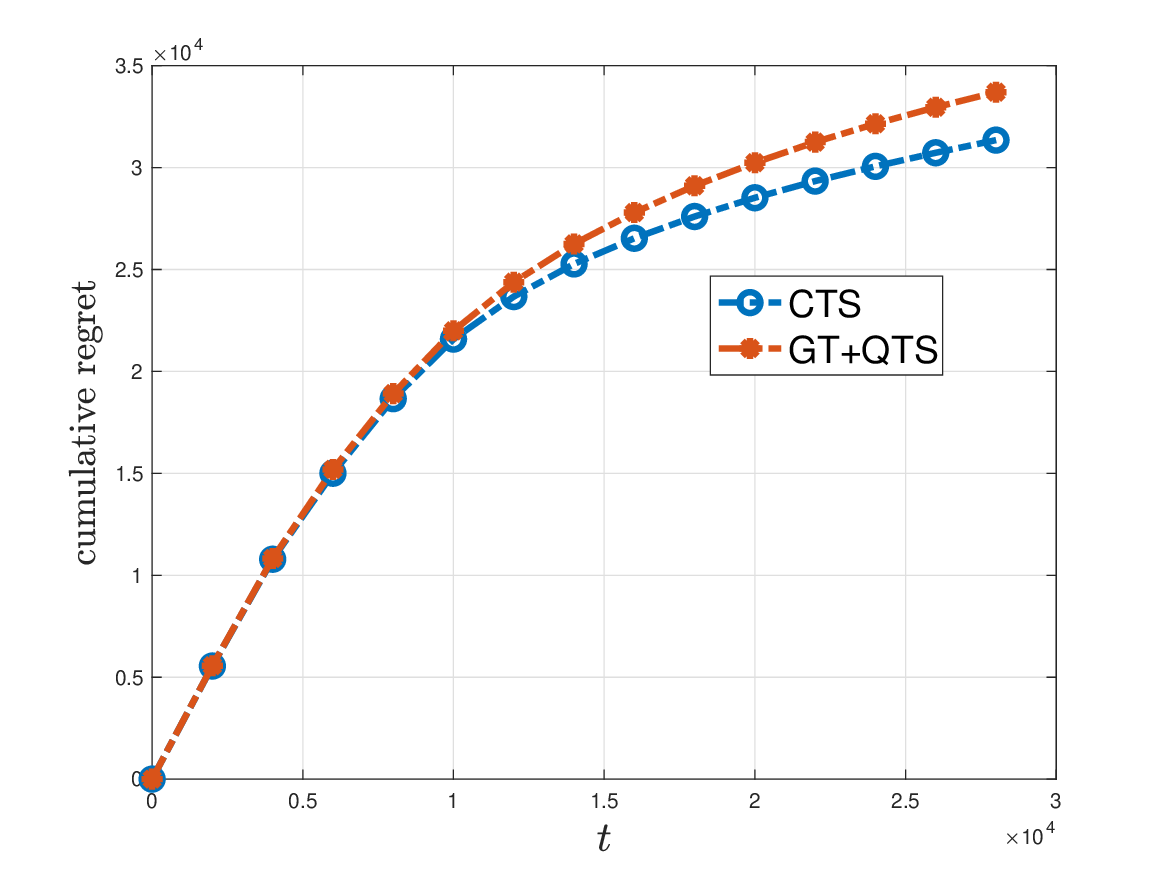} &
    \includegraphics[width=0.31\textwidth,trim={1cm 0cm  1cm  0cm },clip]{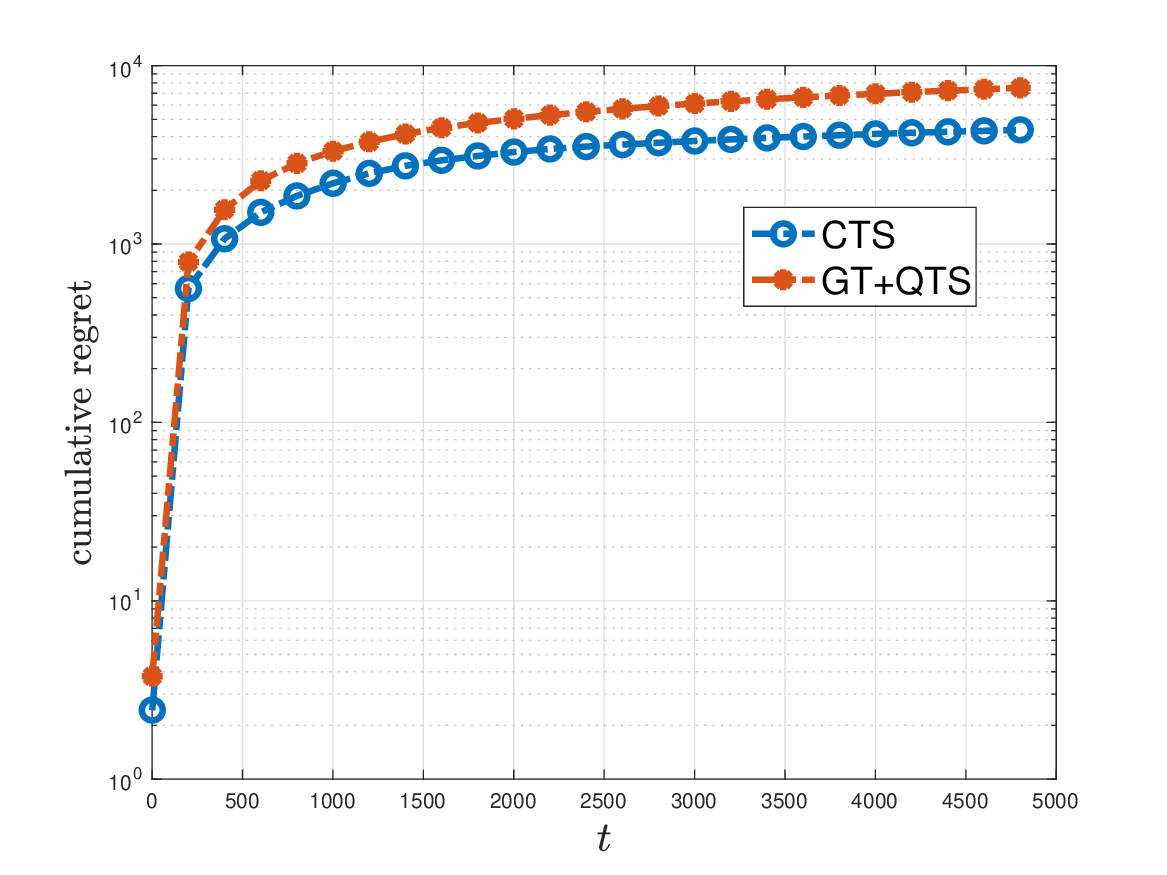} & 
    \includegraphics[width=0.31\textwidth,trim={1cm 0cm  1cm  0cm },clip]{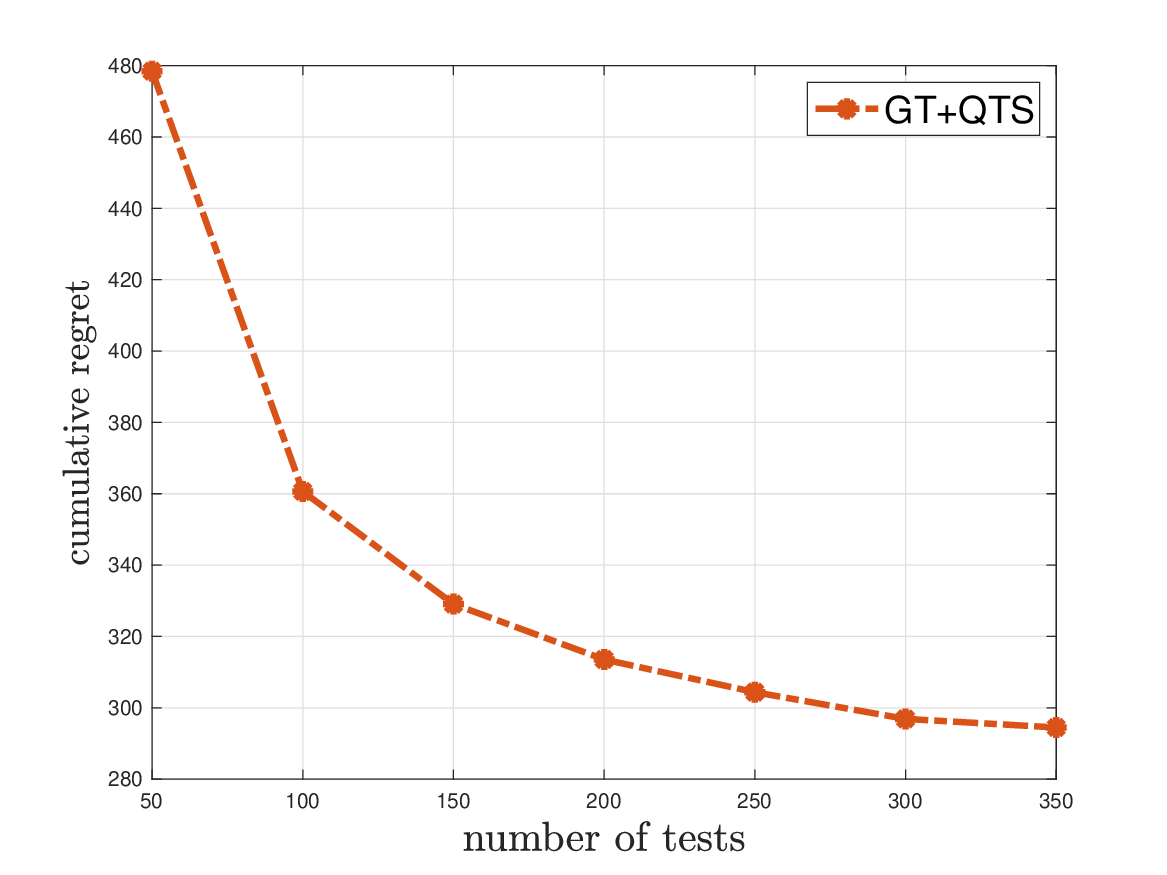} \\
   (A)  & (B) & (C)
\end{tabular}
        \caption{Average cumulative regret versus time $t$ for (A) a linear reward function and (B)a non-linear 2-layer ANN function, respectively. (C) Average cumulative regret versus number of tests $\ell$.}
        \label{fig:2}
\end{figure*}

\paragraph{Linear rewards.}
In this experiment, for any set $\mcS\in\mcI$ and any $\btheta\in[0,1]^m$, we define the reward function as $r(\mcS\;;\;\btheta) = \sum_{i\in\mcS}\theta_i$. We set $m=5000$ arms, and the mean vector $\bmu$ is sampled uniformly randomly from $[0,1]^{5000}$. Furthermore, the cardinality constraint is set to $K=5$ arms. For this experiment, we choose $\bmu$ such that $\Delta_{\min}(\bmu)$ is at least $0.25$, \rev{and we set $\Delta = 0.25$}. Consequently, the group testing oracle requires approximately $302$ reward evaluations (order-wise), versus the exact oracle, which requires $5000$ reward evaluations in each iteration. Hence, the baseline method (CTS) requires $16\times$ more reward evaluations compared to GT+QTS. However, the regret due to CTS and GT+QTS is comparable, and GT+QTS has a slightly larger regret compared to CTS, as observed in Fig.~\ref{fig:2}(A). 

Furthermore, we empirically observe that the number of tests prescribed by theory is excessive, and in practice, much fewer tests are sufficient to guarantee similar cumulative regret. To showcase this, we vary the number of group tests in the GT+QTS algorithm and plot the average cumulative regret against the number of group tests in Figure~\ref{fig:2}(C) computed at $T=10,000$. Figure~\ref{fig:2}(C) confirms that as few as $\ell=200$ tests are sufficient for the regret to be within $5\%$ of the regret at the prescribed number of $\ell\approx 310$ tests.

\begin{table}[t]
	\caption{Average computation time per trial (in secs)}
	\label{tab:1}
	\centering
	\resizebox{0.49\textwidth}{!}{%
		\begin{tabular}{|c|c|c|c|c|c|}
			\hline
			Horizon & $T=1000$     & $T=2000$  & $T=3000$      & $T=4000$  &$T=5000$     \\ \hline
			CTS      & $222.7$ & $458.5$ & $723.8$ & $972.6$ & $1172.5$ \\ \hline 
            GT+QTS      & $\mathbf{55.95}$ & $\mathbf{120.21}$ & $\mathbf{173.52}$ & $\mathbf{229.80}$ & $\mathbf{287.35}$ \\ \hline 
	\end{tabular}}
\end{table}

\paragraph{Two-layer neural network rewards.} Next, we evaluate the performance of GT+QTS on nonlinear mean reward functions. Specifically, we choose a $2$-layer NN with $20$ neurons and sigmoid activation function. For any set $\mcS$ and parameter $\btheta$, the mean reward is $r(\mcS\;;\;\btheta) = \langle \bw_2,\sigma(\bW_1\btheta_{\mcS})\rangle$, where $\sigma(\cdot)$ denotes the sigmoid activation. The weights are uniformly sampled from a normal distribution, and then we take an absolute value to make all weights positive. We choose $m=1000$ arms, which are uniformly sampled at random from $[0,1]^{1000}$. Furthermore, the weight matrices are sampled such that we have $\Delta_{\min}(\bmu)\geq 0.2$, \rev{and we set $\Delta=0.2$}. Figure~\ref{fig:2}(B) illustrates the cumulative regret of the CTS algorithm and the GT+QTS algorithm, which follow the same order in $T$. Furthermore, to assess the gain obtained as a result of the GTO, we report the average time required by the CTS and GT+QTS algorithms for different values of the horizon $T$ in Table~\ref{tab:1}. We observe that GT+QTS is {\em significantly} more efficient compared to CTS.

\subsection*{Conclusions}
We investigate the problem of regret-minimization in combinatorial semi-bandits. Existing approaches assume the existence of an exact oracle, which may not always be computationally viable. To circumvent this issue, we establish a novel connection between group testing and combinatorial bandits. We propose a new arm-selection strategy that combines a group testing oracle with a Thompson sampling-based super-arm selection strategy. Under a \rev{probabilistic assumption on the minimum separation over the class of bandit instances}, the proposed GT-QTS algorithm has two key advantages: 1) it is significantly more efficient compared to the exact oracle since it requires exponentially fewer reward evaluations at each step, and 2) it preserves the regret guarantee of the state-of-the-art method order-wise. We provide numerical evaluations to bolster our analytical claims.


\begin{thebibliography}{40}
\providecommand{\natexlab}[1]{#1}
\providecommand{\url}[1]{\texttt{#1}}
\expandafter\ifx\csname urlstyle\endcsname\relax
  \providecommand{\doi}[1]{doi: #1}\else
  \providecommand{\doi}{doi: \begingroup \urlstyle{rm}\Url}\fi

\bibitem[Chen et~al.(2013)Chen, Wang, and Yuan]{chen2013combinatorial}
W.~Chen, Y.~Wang, and Y.~Yuan.
\newblock Combinatorial multi-armed bandit: General framework and applications.
\newblock In \emph{Proc. International Conference on Machine Learning},
  Atlanta, GA, June 2013.

\bibitem[Combes et~al.(2015)Combes, Talebi M.~S., and
  Proutiere]{combes2015combinatorial}
R.~Combes, M.~Sadegh Talebi M.~S., and A.~Proutiere.
\newblock Combinatorial bandits revisited.
\newblock In \emph{Proc. Advances in Neural Information Processing Systems},
  Quebec, Canada, December 2015.

\bibitem[Chen et~al.(2016{\natexlab{a}})Chen, Hu, Li, Li, Liu, and
  Lu]{chen2016combinatorial}
W.~Chen, W.~Hu, F.~Li, J.~Li, Y.~Liu, and P.~Lu.
\newblock Combinatorial multi-armed bandit with general reward functions.
\newblock In \emph{Proc. Advances in Neural Information Processing Systems},
  Barcelona, Spain, December 2016{\natexlab{a}}.

\bibitem[Chen et~al.(2016{\natexlab{b}})Chen, Wang, Yuan, and
  Wang]{chen2016combinatorial1}
W.~Chen, Y.~Wang, Y.~Yuan, and Q.~Wang.
\newblock Combinatorial multi-armed bandit and its extension to
  probabilistically triggered arms.
\newblock \emph{Journal of Machine Learning Research}, 17\penalty0
  (1):\penalty0 1746--1778, January 2016{\natexlab{b}}.

\bibitem[Nie et~al.(2022)Nie, Agarwal, Umrawal, Aggarwal, and
  Quinn]{nie2022explore}
G.~Nie, M.~Agarwal, A.~K. Umrawal, V.~Aggarwal, and C.~J. Quinn.
\newblock An explore-then-commit algorithm for submodular maximization under
  full-bandit feedback.
\newblock In \emph{Proc. Uncertainty in Artificial Intelligence}, Eindhoven,
  The Netherlands, August 2022.

\bibitem[Jia et~al.(2019)Jia, Dao, Wang, Hubis, Hynes, G{\"u}rel, Li, Zhang,
  Song, and Spanos]{jia2019towards}
R.~Jia, D.~Dao, B.~Wang, F.~A. Hubis, N.~Hynes, N.~M. G{\"u}rel, B.~Li,
  C.~Zhang, D.~Song, and C.~J. Spanos.
\newblock Towards efficient data valuation based on the shapley value.
\newblock In \emph{Proc. International Conference on Artificial Intelligence
  and Statistics}, Okinawa, Japan, April 2019.

\bibitem[Wang and Chen(2018)]{wang2022thompson}
S.~Wang and W.~Chen.
\newblock Thompson sampling for combinatorial semi-bandits.
\newblock In \emph{Proc. International Conference on Machine Learning},
  Stockholm, Sweeden, July 2018.

\bibitem[Kveton et~al.(2015)Kveton, Wen, Ashkan, and
  Szepesvari]{kveton2015tight}
B.~Kveton, Z.~Wen, A.~Ashkan, and C.~Szepesvari.
\newblock Tight regret bounds for stochastic combinatorial semi-bandits.
\newblock In \emph{Proc. Artificial Intelligence and Statistics}, San Diego,
  CA, May 2015.

\bibitem[Perrault et~al.(2021)Perrault, Boursier, Perchet, and
  Valko]{perrault2021statistical}
P.~Perrault, E.~Boursier, V.~Perchet, and M.~Valko.
\newblock Statistical efficiency of {T}hompson sampling for combinatorial
  semi-bandits.
\newblock \emph{arXiv 2006.06613}, 2021.

\bibitem[Krause and Golovin(2014)]{krause2014submodular}
A.~Krause and D.~Golovin.
\newblock Submodular function maximization.
\newblock \emph{Tractability}, 3\penalty0 (71-104):\penalty0 3, February 2014.

\bibitem[Hwang et~al.(2023)Hwang, Chai, and Oh]{hwang2023combinatorial}
T.~Hwang, K.~Chai, and M.-h. Oh.
\newblock Combinatorial neural bandits.
\newblock \emph{arXiv 2306.00242}, 2023.

\bibitem[Dorfman(1943)]{dorfman1943detection}
R.~Dorfman.
\newblock The detection of defective members of large populations.
\newblock \emph{The Annals of Mathematical Statistics}, 14\penalty0
  (4):\penalty0 436--440, 1943.

\bibitem[Du et~al.(2000)Du, Hwang, and Hwang]{du2000combinatorial}
D.~Du, F.~K. Hwang, and F.~Hwang.
\newblock \emph{Combinatorial group testing and its applications}, volume~12.
\newblock World Scientific, 2000.

\bibitem[Aldridge et~al.(2019)Aldridge, Johnson, and Scarlett]{CIT-099}
M.~Aldridge, O.~Johnson, and J.~Scarlett.
\newblock Group testing: An information theory perspective.
\newblock \emph{Foundations and Trends in Communications and Information
  Theory}, 15\penalty0 (3-4):\penalty0 196--392, December 2019.

\bibitem[Hwang and T.~S{\'o}s(1987)]{hwang1987non}
F.~K. Hwang and V.~T.~S{\'o}s.
\newblock Non-adaptive hypergeometric group testing.
\newblock \emph{Studia Scientiarum Mathematicarum Hungarica}, 22:\penalty0
  257--263, 1987.

\bibitem[Chan et~al.(2011)Chan, Che, Jaggi, and Saligrama]{chan2011non}
C.~L. Chan, P.~H. Che, S.~Jaggi, and V.~Saligrama.
\newblock Non-adaptive probabilistic group testing with noisy measurements:
  Near-optimal bounds with efficient algorithms.
\newblock In \emph{Proc. Annual Allerton Conference on Communication, Control,
  and Computing}, Monticello, IL, September 2011.

\bibitem[Zhigljavsky(2003)]{zhigljavsky2003probabilistic}
A.~Zhigljavsky.
\newblock Probabilistic existence theorems in group testing.
\newblock \emph{Journal of Statistical Planning and Inference}, 115\penalty0
  (1):\penalty0 1--43, July 2003.

\bibitem[Gilbert et~al.(2012)Gilbert, Hemenway, Rudra, Strauss, and
  Wootters]{gilbert2012recovering}
A.~C. Gilbert, B.~Hemenway, A.~Rudra, M.~J. Strauss, and M.~Wootters.
\newblock Recovering simple signals.
\newblock In \emph{Proc. Information Theory and Applications Workshop},
  Lausanne, Switzerland, February 2012.

\bibitem[Hwang(1975)]{hwang1975generalized}
FK~Hwang.
\newblock A generalized binomial group testing problem.
\newblock \emph{Journal of the American Statistical Association}, 70\penalty0
  (352):\penalty0 923--926, 1975.

\bibitem[De~Bonis et~al.(2005)De~Bonis, Gasieniec, and Vaccaro]{de2005optimal}
A.~De~Bonis, L.~Gasieniec, and U.~Vaccaro.
\newblock Optimal two-stage algorithms for group testing problems.
\newblock \emph{SIAM Journal on Computing}, 34\penalty0 (5):\penalty0
  1253--1270, 2005.

\bibitem[Du~D(2006)]{du2006pooling}
Hwang~FK Du~D.
\newblock Pooling designs and nonadaptive group testing.
\newblock \emph{Important tools for DNA sequencing. Series on Applied
  Mathematics}, 18, 2006.

\bibitem[D'yachkov(2014)]{d2014lectures}
Arkadii~G D'yachkov.
\newblock Lectures on designing screening experiments.
\newblock \emph{arXiv preprint arXiv:1401.7505}, 2014.

\bibitem[Damaschke(2006)]{damaschke2006threshold}
Peter Damaschke.
\newblock Threshold group testing.
\newblock In \emph{Proc. General Theory of Information Transfer and
  Combinatorics}, pages 707--718. Springer, 2006.

\bibitem[Cheraghchi et~al.(2011)Cheraghchi, Hormati, Karbasi, and
  Vetterli]{cheraghchi2011group}
Mahdi Cheraghchi, Ali Hormati, Amin Karbasi, and Martin Vetterli.
\newblock Group testing with probabilistic tests: Theory, design and
  application.
\newblock \emph{IEEE Transactions on Information Theory}, 57\penalty0
  (10):\penalty0 7057--7067, 2011.

\bibitem[Sihag et~al.(2021)Sihag, Tajer, and Mitra]{sihag2021adaptive}
S.~Sihag, A.~Tajer, and U.~Mitra.
\newblock Adaptive graph-constrained group testing.
\newblock \emph{IEEE Transactions on Signal Processing}, 70:\penalty0 381--396,
  December 2021.

\bibitem[Emad and Milenkovic(2014)]{emad2014semiquantitative}
Amin Emad and Olgica Milenkovic.
\newblock Semiquantitative group testing.
\newblock \emph{IEEE Transactions on Information Theory}, 60\penalty0
  (8):\penalty0 4614--4636, 2014.

\bibitem[Cheraghchi et~al.(2021)Cheraghchi, Gabrys, and
  Milenkovic]{cheraghchi2021semiquantitative}
Mahdi Cheraghchi, Ryan Gabrys, and Olgica Milenkovic.
\newblock Semiquantitative group testing in at most two rounds.
\newblock In \emph{Proc. IEEE International Symposium on Information Theory},
  pages 1973--1978, 2021.

\bibitem[Zhou et~al.(2014)Zhou, Porwal, Zhang, Ngo, Nguyen, R{\'e}, and
  Govindaraju]{zhou2014parallel}
Y.~Zhou, U.~Porwal, C.~Zhang, H.~Q. Ngo, X.~Nguyen, C.~R{\'e}, and
  V.~Govindaraju.
\newblock Parallel feature selection inspired by group testing.
\newblock In \emph{Proc. Advances in Neural Information Processing Systems},
  Quebec, Canada, December 2014.

\bibitem[Ubaru and Mazumdar(2017)]{ubaru2017multilabel}
S.~Ubaru and A.~Mazumdar.
\newblock Multilabel classification with group testing and codes.
\newblock In \emph{Proc. International Conference on Machine Learning}, Sydney,
  Australia, December 2017.

\bibitem[Ubaru et~al.(2020)Ubaru, Dash, Mazumdar, and
  Gunluk]{ubaru2020multilabel}
S.~Ubaru, S.~Dash, A.~Mazumdar, and O.~Gunluk.
\newblock Multilabel classification by hierarchical partitioning and
  data-dependent grouping.
\newblock In \emph{Proc. Advances in Neural Information Processing Systems},
  Vancouver, Canada, December 2020.

\bibitem[Kaufmann et~al.(2012)Kaufmann, Korda, and Munos]{kaufmann2012thompson}
E.~Kaufmann, N.~Korda, and R.~Munos.
\newblock Thompson sampling: An asymptotically optimal finite-time analysis.
\newblock In \emph{Proc. International conference on algorithmic learning
  theory}, Edinburgh, Scotland, June 2012.

\bibitem[Agrawal and Goyal(2012)]{agrawal2012analysis}
S.~Agrawal and N.~Goyal.
\newblock Analysis of {T}hompson sampling for the multi-armed bandit problem.
\newblock In \emph{Proc. International Conference on learning theory},
  Edinburgh, Scotland, June 2012.

\bibitem[Agrawal and Goyal(2013)]{agrawal2013further}
S.~Agrawal and N.~Goyal.
\newblock Further optimal regret bounds for {T}hompson sampling.
\newblock In \emph{Proc. Conference on Artificial Intelligence and Statistics},
  Scottsdale, AZ, May 2013.

\bibitem[Gopalan et~al.(2014)Gopalan, Mannor, and Mansour]{gopalan2014thompson}
A.~Gopalan, S.~Mannor, and Y.~Mansour.
\newblock Thompson sampling for complex online problems.
\newblock In \emph{Proc. International Conference on Machine Learning},
  Beijing, China, June 2014.

\bibitem[Komiyama et~al.(2015)Komiyama, Honda, and
  Nakagawa]{komiyama2015optimal}
J.~Komiyama, J.~Honda, and H.~Nakagawa.
\newblock Optimal regret analysis of {T}hompson sampling in stochastic
  multi-armed bandit problem with multiple plays.
\newblock In \emph{Proc. International Conference on Machine Learning}, Lille,
  France, July 2015.

\bibitem[Wen et~al.(2015)Wen, Kveton, and Ashkan]{wen2015efficient}
Z.~Wen, B.~Kveton, and A.~Ashkan.
\newblock Efficient learning in large-scale combinatorial semi-bandits.
\newblock In \emph{Proc. International Conference on Machine Learning}, Lille,
  France, July 2015.

\bibitem[Russo and Van~Roy(2016)]{russo2016information}
Daniel Russo and Benjamin Van~Roy.
\newblock An information-theoretic analysis of {T}hompson sampling.
\newblock \emph{The Journal of Machine Learning Research}, 17\penalty0
  (1):\penalty0 2442--2471, January 2016.

\bibitem[Musco and Musco(2015)]{musco2015randomized}
Cameron Musco and Christopher Musco.
\newblock Randomized block krylov methods for stronger and faster approximate
  singular value decomposition.
\newblock In \emph{Proc. Advances in Neural nformation processing systems},
  volume~28, Montreal, Canada, 2015.

\bibitem[Apers et~al.(2023)Apers, Gribling, Sen, and
  Szab{\'o}]{apers2023simple}
Simon Apers, Sander Gribling, Sayantan Sen, and D{\'a}niel Szab{\'o}.
\newblock A (simple) classical algorithm for estimating betti numbers.
\newblock \emph{Quantum}, 7:\penalty0 1202, 2023.

\bibitem[Nguyen et~al.(2010)Nguyen, Wainwright, and
  Jordan]{nguyen2010estimating}
X.~Nguyen, M.~J. Wainwright, and M.~I. Jordan.
\newblock Estimating divergence functionals and the likelihood ratio by convex
  risk minimization.
\newblock \emph{IEEE Transactions on Information Theory}, 56\penalty0
  (11):\penalty0 5847--5861, October 2010.

\end{thebibliography}

  \newpage
 \appendix

\def\toptitlebar{
\hrule height4pt
\vskip .25in}

\allowdisplaybreaks

\section{Proof of Lemma~\ref{lemma: number of tests}}
\label{proof: lemma 1}
At any instant $t\in \N$, choose an optimal arm $s\in\mcS^\star(\btheta(t))$ and a sub-optimal arm $\tilde{s}\notin\mcS^\star(\btheta(t))$. For accurate prediction, the 
arm grade $\phi_s(t)$ for arm $s$ should be more than $\phi_{\tilde{s}}(t)$ assigned to arm $\tilde{s}$. Let us denote the $i^{\sf th}$ column of any matrix $\bA$ by $\bA_{:,i}$. Finding the difference between the arm grades, we have
\begin{align}
    \phi_s(t) - \phi_{\tilde{s}}(t)\; &= \;\left\langle \bA_{:,s}\;,\;\rho(t)\right\rangle - \left\langle \bA_{:,\tilde{s}}\;,\;\brho(t)\right\rangle\\
    & = \sum\limits_{j=1}^{\ell} \underbrace{\left ( \bA_{j,s} - \bA_{j,\tilde{s}}\right )\rho_j(t)}_{:=Z_j(t)}\ .
\end{align}
Furthermore, we have
\begin{align}
    \E[Z_j(t)]\;=&\; \E\left [ (\bA_{j,s} - \bA_{j,\tilde{s}})\rho_j(t)\right ]\\
    & =\; \E\left [ (\bA_{j,s} - \bA_{j,\tilde{s}})\times Q(r(\bA_j\;;\;\btheta(t)))\right ]\\
    &=\; \sum\limits_{\mcS\subseteq[m]} \left ( \mathds{1}\{s\in\mcS\} - \mathds{1}\{\tilde{s}\in\mcS\}\right )\times Q(r(\mcS\;;\;\btheta(t)))\times \P(\mcS\in\bA)\\
    &=\; \sum\limits_{\mcS\subseteq[m] : s\in\mcS,\tilde{s}\in\mcS} \left ( \underbrace{\mathds{1}\{s\in\mcS\} - \mathds{1}\{\tilde{s}\in\mcS\}}_{=0}\right )\times Q(r(\mcS\;;\;\btheta(t)))\times \P(\mcS\in\bA)\nonumber\\
    &\;\;\;+\sum\limits_{\mcS\subseteq[m] : s\notin\mcS,\tilde{s}\notin\mcS} \left ( \underbrace{\mathds{1}\{s\in\mcS\} - \mathds{1}\{\tilde{s}\in\mcS\}}_{=0}\right )\times Q(r(\mcS\;;\;\btheta(t)))\times \P(\mcS\in\bA)\nonumber\\
    &\;\;\;\;\;+\sum\limits_{\mcS\subseteq[m] : s\in\mcS,\tilde{s}\notin\mcS} \left ( \underbrace{\mathds{1}\{s\in\mcS\} - \mathds{1}\{\tilde{s}\in\mcS\}}_{=1}\right )\times Q(r(\mcS\;;\;\btheta(t)))\times \P(\mcS\in\bA)\nonumber\\
    &\;\;\;\;\;\;\;\;+\sum\limits_{\mcS\subseteq[m] : s\notin\mcS,\tilde{s}\in\mcS} \left ( \underbrace{\mathds{1}\{s\in\mcS\} - \mathds{1}\{\tilde{s}\in\mcS\}}_{=-1}\right )\times Q(r(\mcS\;;\;\btheta(t)))\times \P(\mcS\in\bA)\\
    &=\; \sum\limits_{\mcS\subseteq [m]\setminus\{s,\tilde{s}\}} Q \Big ( r(\mcS\cup\{s\}\;;\;\btheta(t))\Big)\times\P(\mcS\cup\{s\}\in\bA)\nonumber\\
    &\qquad\qquad - \sum\limits_{\mcS\subseteq [m]\setminus\{s,\tilde{s}\}} Q \Big ( r(\mcS\cup\{\tilde{s}\}\;;\;\btheta(t))\Big)\times\P(\mcS\cup\{\tilde{s}\}\in\bA)\\
    &=\; p(1-p)\sum\limits_{\mcS\subseteq[m]\setminus\{s,\tilde{s}\}} \left ( Q \Big ( r(\mcS\cup\{s\}\;;\;\btheta(t))\Big) - Q \Big ( r(\mcS\cup\{\tilde{s}\}\;;\;\btheta(t))\Big)\right) \nonumber\\
    &\qquad\qquad\qquad\qquad\qquad\times p^{|\mcS|}(1-p)^{m-|\mcS|-2}\ .
    \label{eq:tests1}
\end{align}

Next, let us recall the definitions of the set of repeated tests $\mcI_{\sf nr}$(t). Accordingly, we have
\begin{align}
    \E[Z_j(t)]\;&=\; p(1-p)\sum\limits_{\mcS\subseteq[m]\setminus\{s,\tilde{s}\}} \left ( Q \Big ( r(\mcS\cup\{s\}\;;\;\btheta(t))\Big) - Q \Big ( r(\mcS\cup\{\tilde{s}\}\;;\;\btheta(t))\Big)\right) \nonumber\\
    &\qquad\qquad\qquad\qquad\qquad\times p^{|\mcS|}(1-p)^{m-|\mcS|-2}\\
    &=\;p(1-p)\sum\limits_{\mcS\subseteq[m]\setminus\{s,\tilde{s}\}:\mcS\cup\{s\}\in\mcI_{\sf nr}(t)} \underbrace{\left ( Q \Big ( r(\mcS\cup\{s\}\;;\;\btheta(t))\Big) - Q \Big ( r(\mcS\cup\{\tilde{s}\}\;;\;\btheta(t))\Big)\right)}_{\geq \frac{\Delta_{\min}(\bmu)}{2B}} \nonumber\\
    &\qquad\qquad\qquad\qquad\qquad\times p^{|\mcS|}(1-p)^{m-|\mcS|-2}\nonumber\\
    &\;+p(1-p)\sum\limits_{\mcS\subseteq[m]\setminus\{s,\tilde{s}\}:\mcS\cup\{s\}\notin\mcI_{\sf nr}(t)} \underbrace{\left ( Q \Big ( r(\mcS\cup\{s\}\;;\;\btheta(t))\Big) - Q \Big ( r(\mcS\cup\{\tilde{s}\}\;;\;\btheta(t))\Big)\right)}_{\geq 0} \nonumber\\
    &\qquad\qquad\qquad\qquad\qquad\times p^{|\mcS|}(1-p)^{m-|\mcS|-2}\\
    &\geq\; \frac{\Delta}{2B}p(1-p) \sum\limits_{\mcS\subseteq[m]\setminus\{s,\tilde{s}\} : \mcS\cup\{s\}\in\mcI_{\sf nr}(t)} p^{|\mcS|}(1-p)^{m-|\mcS|-2}\\
    &=\;\frac{\Delta}{2B}p(1-p)\P\Big (\{\mcS\in\mcI_{\sf nr}(t) : s\in\mcS\} \Big )\\
    &=\;\frac{\Delta}{2B}p^2(1-p) q(t)\ ,
    \label{eq:tests2}
\end{align}
where~\eqref{eq:tests2} follows from the fact that $\P(\mcS\in\mcI_{\sf nr}(t), s\in\mcS) = \P(\mcS\in\mcI_{\sf nr}(t)\med s\in\mcS)\P(s\in\mcS)=pq(t)$. Furthermore, since we have $Z_j(t)\in[-M,M]$ as per Assumption~\ref{assumption: bounded reward}, by Hoeffding's inequality we have
\begin{align}
    \P\Big ( \phi_s(t) - \phi_{\tilde{s}}(t)\leq 0\Big )\;\leq\;\exp\left ( -\frac{\ell\Delta^2p^4(1-p)^2q^2(t)}{8M^2B^2}\right )\ .
    \label{eq: tests5}
\end{align}
Finally, noting that there are $K(m-K)$ possible ways to choose $s$ and $\tilde{s}$, taking a union bound along with~\eqref{eq: tests5} concludes the proof.

\textbf{Estimating $q$}: First, note that $\hat q(t)$ is an unbiased estimator of $q(t)$. This is because
\begin{align}
    \E[\hat q(t)]\;&=\; \frac{1}{\ell}\E\left[ \sum\limits_{j\in\ell} \mathds{1}\{\bA_j\in\mcI_{\sf nr}(t)\}\right]\\
    &=\; \frac{1}{\ell}\sum\limits_{j\in\ell} \P\Big ( \bA_j\in\mcI_{\sf nr}(t)\Big )\\
    &=\; \P\Big ( \mcI_{\sf nr}(t)\Big )\\
    &=\; q(t)\ .
\end{align}
Furthermore, since $\hat q(t)$ is an unbiased estimator of $q$, using the Hoeffding's inequality, we obtain that for any $\varepsilon\in\R_+$ and $\delta\in(0,1)$,
\begin{align}
    \ell\;=\; \frac{1}{2\varepsilon^2}\log\frac{1}{\delta}
\end{align}
tests are sufficient to ensure that 
\begin{align}
    \P\Big( |\hat q(t) - q|>\varepsilon\Big)\;\leq\;\delta\ .
\end{align}

\section{Proof of Lemma~\ref{lemma:optimal_quantized}}

    \rev{First, we will show that with a high probability, we have  $\mcT_{Q}(\bmu)\cap\mcT(\bmu)\neq\emptyset$. Note that 
    \begin{align}
        &\P\Big(  \mcT_{Q}(\bmu)\cap\mcT(\bmu) = \emptyset\Big)\nonumber\\
        &=\P\Big( \exists\:\mcS\in\mcT(\bmu),\exists\: \mcS^\prime\in\mcT_{Q}(\bmu) : \mcS\notin\mcT_{Q}(\bmu)\;\text{and}\; \mcS^\prime\notin \mcT(\bmu)\Big)\\
        &\leq \P\Big(\exists\:\mcS\in\mcT(\bmu),\exists\: \mcS^\prime\in\mcT_{Q}(\bmu) :  r(\mcS\;;\;\bmu) - r(\mcS^\prime\;;\;\bmu) \geq \Delta_{\min}(\bmu) \Big)\\
        &=\P\Big ( \exists\:\mcS\in\mcT(\bmu),\exists\: \mcS^\prime\in\mcT_{Q}(\bmu) :\nonumber\\
        &\qquad r(\mcS\;;\;\bmu) - Q(r(\mcS^\prime\;;\;\bmu)) + Q(r(\mcS^\prime\;;\;\bmu)) - r(\mcS^\prime\;;\;\bmu) \geq \Delta_{\min}(\bmu)\Big)\\
        \label{eq:quantization_1}
        &\leq\P\Big ( \exists\:\mcS\in\mcT(\bmu),\exists\: \mcS^\prime\in\mcT_{Q}(\bmu) :\nonumber\\
        &\qquad r(\mcS\;;\;\bmu) - Q(r(\mcS\;;\;\bmu)) + Q(r(\mcS^\prime\;;\;\bmu)) - r(\mcS^\prime\;;\;\bmu) \geq \Delta_{\min}(\bmu)\Big)\\
        \label{eq:quantization_2}
        &\leq\P\bigg( \exists\:\mcS\in\mcT(\bmu),\exists\: \mcS^\prime\in\mcT_{Q}(\bmu) :\frac{\Delta}{4B} + \frac{\Delta}{4B}\geq \Delta_{\min}(\bmu)\bigg)\\
        &\leq\P\bigg( \frac{\Delta}{2B}\geq \Delta_{\min}(\bmu)\bigg)\\
        \label{eq:quantization_2a}
        &=\P\bigg( \frac{\mathbb{F}_{\bmu}^{-1}(\gamma)}{2B}\geq \Delta_{\min}(\bmu)\bigg)\\
        \label{eq:quantization_3}
        &\leq \P\Big(\mathbb{F}_{\bmu}^{-1}(\gamma) \geq \Delta_{\min}(\bmu) \Big)\\
        \label{eq:quantization_3a}
        &=\gamma\ ,
    \end{align}
    where~\eqref{eq:quantization_1} follows from the definition of the set $\mcS^\prime$, \eqref{eq:quantization_2} follows from the quantization scheme in~\eqref{eq: quantizer}, \eqref{eq:quantization_2a} holds since we have set $\Delta = \mathbb{F}_{\bmu}^{-1}(\gamma)$, and \eqref{eq:quantization_3} holds since $B$ is the Lipschitz constant in Assumption~\ref{assumption:Lipschitz}, and it can always be set to be larger than $1/2$, if any $B<1/2$ satisfies Assumption~\ref{assumption:Lipschitz}.}
    
    \rev{This proves that $\mcT_Q(\bmu)\cap\mcT(\bmu)\neq \emptyset$ with probability at least $1-\gamma$. Next, following a similar line of arguments, we will show that $\mcT_{Q}(\bmu)\subseteq \mcT(\bmu)$ with a high probability. Let us define the event
    \begin{align}
        \mcE(\bmu)\;:=\;\Big\{ \mcT_{Q}(\bmu)\cap \mcT(\bmu) = \emptyset\Big\}\ .
    \end{align}
    We have,
    \begin{align}
        &\P\big( \mcT(\bmu)\subset\mcT_Q(\bmu)\Big)\nonumber\\
        &=\;\P\Big( \exists\: \mcS^\prime\in \mcT_Q(\bmu) : \mcS^\prime\notin\mcT(\bmu)\Big)\\
        &=\;\P\Big( \exists\: \mcS^\prime\in \mcT_Q(\bmu) : \mcS^\prime\notin\mcT(\bmu)\med\mcE(\bmu)\Big)\P\big(\mcE(\bmu)\big)+ \P\Big( \exists\: \mcS^\prime\in \mcT_Q(\bmu) : \mcS^\prime\notin\mcT(\bmu)\med\overline{\mcE(\bmu)}\Big)\P\big(\overline{\mcE(\bmu)}\big)\\
        &\stackrel{\eqref{eq:quantization_3a}}{<}\; \P\Big( \exists\: \mcS^\prime\in \mcT_Q(\bmu) : \mcS^\prime\notin\mcT(\bmu)\med\overline{\mcE(\bmu)}\Big) + \gamma\\
        &=\;\P\Big ( \exists\:\mcS^\prime\in\mcT_Q(\bmu), \tilde\mcS\in\mcT_Q(\bmu)\cap\mcT(\bmu) : r(\tilde{\mcS}\;;\;\bmu) - r(\mcS^\prime\;;\;\bmu) \geq \Delta_{\min}(\bmu)\med \overline{\mcE(\bmu)}\Big ) + \gamma\\
        &=\;\P\Big ( \exists\:\mcS^\prime\in\mcT_Q(\bmu), \tilde\mcS\in\mcT_Q(\bmu)\cap\mcT(\bmu) : r(\tilde{\mcS}\;;\;\bmu) - Q(r(\mcS^\prime\;;\;\bmu))\nonumber\\
        &\qquad\qquad + Q(r(\mcS^\prime\;;\;\bmu)) - r(\mcS^\prime\;;\;\bmu) \geq \Delta_{\min}(\bmu)\med \overline{\mcE(\bmu)}\Big ) + \gamma\\
        &=\;\P\Big ( \exists\:\mcS^\prime\in\mcT_Q(\bmu), \tilde\mcS\in\mcT_Q(\bmu)\cap\mcT(\bmu) : r(\tilde{\mcS}\;;\;\bmu) - Q(r(\tilde\mcS\;;\;\bmu))\nonumber\\
        &\qquad\qquad + Q(r(\mcS^\prime\;;\;\bmu)) - r(\mcS^\prime\;;\;\bmu) \geq \Delta_{\min}(\bmu)\med \overline{\mcE(\bmu)}\Big ) + \gamma\\
        &\leq\;\P\Big( \exists\:\mcS^\prime\in\mcT_Q(\bmu), \tilde\mcS\in\mcT_Q(\bmu)\cap\mcT(\bmu) : \frac{\Delta}{4B} + \frac{\Delta}{4B} > \Delta_{\min}(\bmu)\med \overline{\mcE(\bmu)}\Big) + \gamma\\
        &\leq\; \P\Big( \exists\:\mcS^\prime\in\mcT_Q(\bmu), \tilde\mcS\in\mcT_Q(\bmu)\cap\mcT(\bmu) : \frac{1}{2}\Delta > \Delta_{\min}(\bmu)\med \overline{\mcE(\bmu)}\Big) + \gamma\\
        \label{eq:quantization_4}
        &\leq\; \P\Big( \frac{1}{2}\Delta > \Delta_{\min}(\bmu)\Big)\\
        &\leq\; 2\gamma \ ,
    \end{align}
    where~\eqref{eq:quantization_4} follows from the fact that the events $\overline{\mcE(\bmu)}$ and $\{\frac{1}{2}\Delta_{\min}(\bmu)\}$ are independent of each other, since the distribution of $\Delta_{\min}(\bmu)$ is a property of the environment, and does not depend on the event $\mcE(\bmu)$. This concludes our proof.}

\section{Proof of Theorem~\ref{theorem: regret upper bound}}
\label{proof: regret upper bound}
Similarly to~\citep{wang2022thompson}, we begin by defining a few events that are instrumental in characterizing the upper bound on the average regret. First, let us denote the number of times that any arm $i\in[m]$ is sampled until time $t\in\N$ by $T_i(t)$. Furthermore, let us denote the sample mean for any arm $i\in[m]$ at time $t\in\N$ by $\bar{\mu}_i(t)$. Accordingly, let us define
\begin{enumerate}
    \item $\mcA(t)\;:=\;\{\mcS(t)\notin\mcT_Q(\bmu)\}$.
    \item $\mcB(t)\;:=\; \bigg\{\exists i\in\mcS(t) : |\bar\mu_i(t) - \mu_i| > \frac{\varepsilon}{|\mcS(t)|}\bigg\}$.
    \item $\mcC(t)\;:=\;\left\{||\btheta_{\mcS(t)}(t) - \bmu_{\mcS(t)}||_1 > \frac{\Delta(\mcS(t),\bmu)}{B} - \frac{\Delta_{\min}(\bmu)}{2B} - (K^2 + 2)\varepsilon\right\}$.
\end{enumerate}
\rev{With a probability at least $1-2\gamma$}, we can decompose the regret as follows.
\begin{align}
    \mathfrak{R}(T)\; &= \;\sum\limits_{t=1}^T \E\Big[\mathds{1}\{\mcS(t)\notin\mcT(\bmu)\}\times\Delta(\mcS(t),\bmu)\Big]\\
    \label{eq:regret_1}
    &\leq\;\sum\limits_{t=1}^T \E\Big [\mathds{1}\{\mcA(t))\}\times\Delta(\mcS(t),\bmu)\Big ]\\
    &\leq\;\underbrace{\sum\limits_{t=1}^T \E\Big[\mathds{1}\{\mcA(t)\cap\mcB(t)\}\times\Delta(\mcS(t),\bmu)\Big]}_{A_1}+ \underbrace{\sum\limits_{t=1}^T\E\Big[\mathds{1}\{\mcA(t)\cap\overline{\mcB(t)}\cap\mcC(t)\}\times\Delta(\mcS(t),\bmu)\Big]}_{A_2}\nonumber\\
    &\qquad\qquad\qquad+\underbrace{\sum\limits_{t=1}^T \E\Big[\mathds{1}\{\mcA(t)\cap\overline{\mcC(t)}\}\times\Delta(\mcS(t),\bmu)\Big]}_{A_3}\ ,
\end{align}
where~\eqref{eq:regret_1} is a result of Lemma~\ref{lemma:optimal_quantized}. Next, we find an upper bound for each of the terms $A_1$, $A_2$ and $A_3$ to recover the regret bound in Theorem~\ref{theorem: regret upper bound}. 

\paragraph{Upper-bounding $A_1$:}

First, we leverage~\cite[Lemma 1]{wang2022thompson} to find an upper bound on $A_1$, which we state below for completeness.
\begin{lemma}[~\cite{wang2022thompson}]
\label{lemma:A1}
    In Algorithm~\ref{alg:CMAB-GT}, we have
    \begin{align}
        \E\left[\sum\limits_{t=1}^T \mathds{1}\{i\in\mcS(t),\;\;|\bar\mu_i(t) - \mu_i|>\varepsilon\}\right]\;\leq\;1 + \frac{1}{\varepsilon^2}\ .
    \end{align}
\end{lemma}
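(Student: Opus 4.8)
The plan is to follow the standard decoupling argument of~\cite{wang2022thompson}: re-index the sum over rounds by the number of observations already collected from arm $i$, apply a Chernoff--Hoeffding tail bound at each fixed sample size, and control the resulting series by comparison with an integral. Fix an arm $i\in[m]$. I would let $\hat\mu_{i,s}$ denote the empirical mean of the first $s$ observations of arm $i$ produced along the run of the algorithm, with the convention $\hat\mu_{i,0}:=0$. The crucial point is that $\hat\mu_{i,s}$ is a deterministic function of the i.i.d.\ Bernoulli$(\mu_i)$ observation stream of arm $i$ and does \emph{not} depend on the random, adaptive rounds at which those observations are acquired. Writing $N_{i,t}$ for the number of observations of arm $i$ backing $\bar\mu_i(t)$, and noting that $N_{i,t}$ increases by exactly one in every round with $i\in\mcS(t)$, one sees that as $t$ ranges over $\{t\le T:\ i\in\mcS(t)\}$ the values $N_{i,t}$ are all distinct, so $\bar\mu_i(t)=\hat\mu_{i,N_{i,t}}$ runs over distinct sample sizes.

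With this reformulation, the random sum is dominated pathwise by a sum over all possible sample sizes,
\begin{align}
\sum_{t=1}^T \mathds{1}\{i\in\mcS(t),\ |\bar\mu_i(t)-\mu_i|>\varepsilon\}\;\le\; 1 + \sum_{s=1}^{\infty}\mathds{1}\{|\hat\mu_{i,s}-\mu_i|>\varepsilon\}\ ,
\end{align}
where the additive $1$ absorbs the at-most-one contribution of the initial data-free estimate $\hat\mu_{i,0}$. Taking expectations, using linearity, and applying Hoeffding's inequality termwise (each $\hat\mu_{i,s}$ being an average of $s$ independent $[0,1]$-valued variables) would bound the left-hand side by $1+\sum_{s\ge 1}2e^{-2s\varepsilon^2}$; since $x\mapsto 2e^{-2x\varepsilon^2}$ is decreasing, the series is at most $\int_0^\infty 2e^{-2x\varepsilon^2}\,\mathrm{d}x=\varepsilon^{-2}$, giving $1+\varepsilon^{-2}$ as claimed. (Under the alternative convention that $\bar\mu_i(t)$ already incorporates the round-$t$ observation, the $s=0$ term never appears and the bound tightens to $\varepsilon^{-2}$.)

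The only genuinely delicate point --- the step I would be most careful about --- is that one cannot apply a concentration inequality directly to $\bar\mu_i(t)$, since the number of samples behind it is a random, history-dependent quantity (essentially a stopping time) rather than a fixed count. The re-indexing by the deterministic sample size $s$ is exactly what decouples the adaptive arm-selection process from the i.i.d.\ observation process, after which the termwise Hoeffding bound is legitimate; the remainder is routine bookkeeping for the initial pull and a standard sum-to-integral comparison.
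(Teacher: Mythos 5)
Your proof is correct. The paper itself does not prove this lemma --- it imports it verbatim as Lemma~1 of \cite{wang2022thompson} --- and your reconstruction is exactly the standard argument used there: re-index the rounds where arm $i$ is observed by the (distinct, deterministic-per-path) sample counts $s$, apply Hoeffding at each fixed $s$ to the decoupled i.i.d.\ observation stream, and bound $\sum_{s\ge 1}2e^{-2s\varepsilon^2}\le\int_0^\infty 2e^{-2x\varepsilon^2}\,\mathrm{d}x=\varepsilon^{-2}$, with the additive $1$ covering the data-free initial estimate. You also correctly flag the one genuinely delicate point, namely that the concentration bound must be applied at fixed sample sizes rather than at the adaptive, history-dependent counts $N_{i,t}$.
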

Leveraging Lemma~\ref{lemma:A1}, it can be readily verified that the regret due to $A_1$ can be upper bounded as
\begin{align}
    A_1\;\leq\; \left (\frac{mK^2}{\varepsilon^2} + m\right )\Delta_{\max}(\bmu)\ .
\end{align}

\paragraph{Upper-bounding $A_2$:}

Next, we provide an upper-bound for the term $A_2$. First, note that under the event $\overline{\mcB(t)}\cap\mcC(t)$, the event
\begin{align}
    \mcG(t)\;:=\; \left\{ ||\btheta_{\mcS(t)}(t) - \bar\bmu_{\mcS(t)}||_1 > \frac{\Delta(\mcS(t),\bmu)}{B} - \frac{\Delta_{\min}(\bmu)}{2B} - (K^2 + 1)\varepsilon\right\}
\end{align}
holds. Furthermore, let us define the event 
\begin{align}
    \mcH(t)\;:=\; \left\{\sum\limits_{i\in\mcS(t)} \frac{1}{T_i(t)} \leq \frac{2\Big ( \frac{\Delta(\mcS(t),\bmu)}{B} - \frac{\Delta_{\min}(\bmu)}{2B} - (K^2 + 2)\varepsilon )^2}{\log(2^m|\mcI|T)}\right\}\ .
\end{align}
Subsequently, we may expand the event $\mcG(t)$ as
\begin{align}
    \mcG(t)\;=\; \mcG(t)\cap\mcH(t)\;\;\cup\;\;\mcG(t)\cap\overline{\mcH(t)}\ .
\end{align}
Next, note that using~\cite[Lemma 2]{perrault2021statistical}, it can be readily verified that 
\begin{align}
    \P\left ( \mcG(t)\cap\mcH(t)\right )\;\leq\;\frac{1}{T}\quad\forall t\in\N\ .
\end{align}
Hence, what remains is to upper-bound the term
\begin{align}
    \sum\limits_{t=1}^T \E\left[\mathds{1}\{\mcG(t)\cap\overline{\mcH(t)}\}\times\Delta(\mcS(t),\bmu)\right]\ .
\end{align}
Similarly to~\cite{wang2022thompson}, under the event $\overline{\mcH(t)}$, we define a function that upper-bounds the regret at time $t$ due to the super-arm $\mcS(t)$. Specifically, for any arm $i\in\mcS(t)$, let $g_i(T_i(t))$ denote this function, and we show that $\sum_{i\in\mcS(t)}g_i(T_i(t))\geq\Delta(\mcS(t),\bmu)$. Finally, we have
\begin{align}
    \sum\limits_{t=1}^T \E\left[\mathds{1}\{\mcG(t)\cap\overline{\mcH(t)}\}\times\Delta(\mcS(t),\bmu)\right]\;\leq\; \sum\limits_{t\in\N}\sum\limits_{i\in\mcS(t)} g_i(T_i(t))\ .
\end{align}
For any $i\in[m]$, let us define the function
\begin{align}
\label{eq: g_i}
      &g_i(n):= \left\{
	\begin{array}{ll}
	\displaystyle \Delta_{\max}(\bmu), & \mbox{if} \;\; n=0\\
    &\\
    {\displaystyle 2B\sqrt{\frac{\log(2^m|\mcI|T)}{n}}} , & {\displaystyle\mbox{if}\;\;1\leq n\leq L_{i,1}}\\
    &\\
	{\displaystyle  \frac{2B\log(2^m|\mcI|T)}{n\min\limits_{\mcS : i\in\mcS}\Big( \frac{\Delta(\mcS,\bmu)}{B} - \frac{\Delta_{\min}(\bmu)}{2} - (K^2+2)\varepsilon\Big)}} , & {\displaystyle\mbox{if}\;\; L_{i,1}<n\leq L_{i,2}} \\
    &\\
    0 , & \mbox{if}\;\; n > L_{i,2} \\
	\end{array}\right. \ ,
\end{align}
where we have defined 
\begin{align}
    L_{i,1}\;:=\; \frac{\log(2^m|\mcI|T)}{\min\limits_{\mcS : i\in\mcS}\Big ( \frac{\Delta(\mcS,\bmu)}{B} - \frac{\Delta_{\min}(\bmu)}{2B} - (K^2+2)\varepsilon\Big)^2}\ ,
\end{align}
and,
\begin{align}
    L_{i,2}\;:=\; \frac{K\log(2^m|\mcI|T)}{\min\limits_{\mcS : i\in\mcS}\Big ( \frac{\Delta(\mcS,\bmu)}{B} - \frac{\Delta_{\min}(\bmu)}{2B} - (K^2+2)\varepsilon\Big)^2}\ .
\end{align}

Next, we verify that the function $g_i$ defined in~\eqref{eq: g_i} satisfies the condition that $\sum_{i\in\mcS(t)} g_i(T_i(t))\geq \Delta(\mcS(t),\bmu)$ for every $t\in\N$. First, note that if there exists arms $j\in\mcS(t)$ such that $T_j(t)=0$, we have
\begin{align}
    \sum\limits_{i\in\mcS(t)}g_i(T_i(t))\;&\geq\; g_j(T_j(t))\\
    &=\; \Delta_{\max}(\bmu)\\
    &\geq\;\Delta(\mcS(t),\bmu)\ .
\end{align}
Next, if there exists an arm $j\in\mcS(t)$ such that 
\begin{align}
1\leq T_j(t)\leq \frac{\log(2^m|\mcI|T)}{\big ( \frac{\Delta(\mcS(t),\bmu)}{B} - \frac{\Delta_{\min}(\bmu)}{2B} - (K^2+2)\varepsilon\big)^2}\ ,    
\end{align}
which implies that $1\leq T_j(t)\leq L_{i,1}$, we have
\begin{align}
    \sum\limits_{i\in\mcS(t)} g_i(T_i(t))\;&\geq\; g_j(T_j(t))\\
    &=\; 2\sqrt{\frac{\log(2^m|\mcI|T)}{T_j(t)}}\\
    &\geq\; 2\left ( \frac{\Delta(\mcS(t),\bmu)}{B} - \frac{\Delta_{\min}(\bmu)}{2B} - (K^2+2)\varepsilon\right )\\
    &\geq\;\Delta(\mcS(t),\bmu)\ ,
    \label{eq: A2_1}
\end{align}
where~\eqref{eq: A2_1} holds since we have chosen $\varepsilon\in\R_+$ such that
\begin{align}
    \varepsilon\;<\;\frac{\Delta_{\min}(\bmu)}{4B(K^2 + 2)}\ .
\end{align}
Next, if, for all $i\in\mcS(t)$ we have 
\begin{align}
T_i(t) > \frac{\log(2^m|\mcI|T)}{\big ( \frac{\Delta(\mcS(t),\bmu)}{B} - \frac{\Delta_{\min}(\bmu)}{2B} - (K^2+2)\varepsilon\big)^2}\ ,    
\end{align}
we can decompose $\mcS(t)$ into three disjoint subsets. Specifically, we define 
\begin{align}
\mcS_1(t) &:=\{i\in\mcS(t) : T_i(t) \leq L_{i,1}\}\ , \\
\mcS_2(t) &:= \{i\in\mcS(t) : L_{i,1}<T_i(t)\leq L_{i,2}\}\ ,\\
\mcS_3(t) &: = \{i\in\mcS(t): T_i(t) > L_{i,2} \}\ .
\end{align}
Subsequently, we have
\begin{align}
    &\sum\limits_{i\in\mcS(t)} g_i(T_i(t))\nonumber\\
    &\;\;= \sum\limits_{i\in\mcS_1(t)} g_i(T_i(t)) + \sum\limits_{i\in\mcS_2(t)} g_i(T_i(t))\\
    &\;\; = \sum\limits_{i\in\mcS_1(t)} 2B\sqrt{\frac{\log(2^m|\mcI|T)}{T_i(t)}} + \sum\limits_{i\in\mcS_2(t)} \frac{2B\log(2^m|\mcI|T)}{T_i(t)\min\limits_{\mcS : i\in\mcS} \big ( \frac{\Delta(\mcS,\bmu)}{B} - \frac{\Delta_{\min}(\bmu)}{2B} - (K^2+2)\varepsilon\big )}\\
    &\;\; \geq \sum\limits_{i\in\mcS_1(t)} 2B\sqrt{\frac{\log(2^m|\mcI|T)}{T_i(t)}} + \sum\limits_{i\in\mcS_2(t)} \frac{2B\log(2^m|\mcI|T)}{T_i(t)\big ( \frac{\Delta(\mcS(t),\bmu)}{B} - \frac{\Delta_{\min}(\bmu)}{2B} - (K^2+2)\varepsilon\big )}\\
    &\;\; = \sum\limits_{i\in\mcS_1(t)} {\frac{2B\log(2^m|\mcI|T)}{T_i(t)\big ( \frac{\Delta(\mcS(t),\bmu)}{B} - \frac{\Delta_{\min}(\bmu)}{2B} - (K^2+2)\varepsilon\big )}}\times \sqrt{\frac{T_i(t)\big ( \frac{\Delta(\mcS(t),\bmu)}{B} - \frac{\Delta_{\min}(\bmu)}{2B} - (K^2+2)\varepsilon\big )^2}{\log(2^m|\mcI|T)}}\nonumber\\
    &\qquad + \sum\limits_{i\in\mcS_2(t)} \frac{2B\log(2^m|\mcI|T)}{T_i(t)\big ( \frac{\Delta(\mcS(t),\bmu)}{B} - \frac{\Delta_{\min}(\bmu)}{2B} - (K^2+2)\varepsilon\big )}\\
    \label{eq: A2_2}
    &\;\;\geq \sum\limits_{i\in\mcS_1(t)} {\frac{2B\log(2^m|\mcI|T)}{T_i(t)\big ( \frac{\Delta(\mcS(t),\bmu)}{B} - \frac{\Delta_{\min}(\bmu)}{2B} - (K^2+2)\varepsilon\big )}}+ \sum\limits_{i\in\mcS_2(t)} \frac{2B\log(2^m|\mcI|T)}{T_i(t) \big ( \frac{\Delta(\mcS(t),\bmu)}{B} - \frac{\Delta_{\min}(\bmu)}{2B} - (K^2+2)\varepsilon\big )}\\
    &\;\;= \frac{2B\log(2^m|\mcI|T)}{\big (\frac{\Delta(\mcS(t),\bmu)}{B} - \frac{\Delta_{\min}(\bmu)}{2B} - (K^2+2)\varepsilon  \big )}\times\left (\sum\limits_{i\in\mcS(t)} \frac{1}{T_i(t)} - \sum\limits_{i\in\mcS_3(t)}\frac{1}{T_i(t)}\right )\\
    \label{eq: A2_3}
    &\;\;\geq \frac{2B\log(2^m|\mcI|T)}{\big (\frac{\Delta(\mcS(t),\bmu)}{B} - \frac{\Delta_{\min}(\bmu)}{2B} - (K^2+2)\varepsilon\big )}\nonumber\\
    &\qquad\qquad\times\left ( \frac{2\big (\frac{\Delta(\mcS(t),\bmu)}{B} - \frac{\Delta_{\min}(\bmu)}{2B} - (K^2+2)\varepsilon\big )^2}{\log(2^m|\mcI|T)}- \frac{\big (\frac{\Delta(\mcS(t),\bmu)}{B} - \frac{\Delta_{\min}(\bmu)}{2B} - (K^2+2)\varepsilon\big )^2}{\log(2^m|\mcI|T)}\right )\\
    \label{eq: A2_4}
    \\&\;\;\geq \Delta(\mcS(t),\bmu)\ ,
\end{align}
where~\eqref{eq: A2_2} uses the fact that 
\begin{align}
T_i(t) > \frac{\log(2^m|\mcI|T)}{\big ( \frac{\Delta(\mcS(t),\bmu)}{B} - \frac{\Delta_{\min}(\bmu)}{2B} - (K^2+2)\varepsilon\big)^2}\ , \quad  \forall  \;\; i\in\mcS(t)\ ,    
\end{align}
and \eqref{eq: A2_3} holds due to the event $\overline{\mcH(t)}$ along with the definition of the set $\mcS_3(t)$, and~\eqref{eq: A2_4} holds by the choice of $\varepsilon\in\R_+$. Finally, if all $i\in\mcS(t)$ satisfy $T_i(t) > L_{i,2}$, we have
\begin{align}
    \sum\limits_{i\in\mcS(t)}\frac{1}{T_i(t)}\;&\leq\; \sum\limits_{i\in\mcS(t)}\frac{1}{L_{i,2}}\\
    &=\; \sum\limits_{i\in\mcS(t)} \frac{\min\limits_{\mcS : i\in\mcS}\big ( \frac{\Delta(\mcS,\bmu)}{B} - \frac{\Delta_{\min}(\bmu)}{2B} - (K^2+2)\varepsilon\big )^2}{K\log(2^m|\mcI|T)}\\
    &\leq\frac{\big ( \frac{\Delta(\mcS(t),\bmu)}{B} - \frac{\Delta_{\min}(\bmu)}{2B} - (K^2+2)\varepsilon)^2}{\log(2^m|\mcI|T)}\ ,
\end{align}
which is in contradiction with the event $\overline{\mcH(t)}$. Hence, we have shown that under the event $\overline{\mcH(t)}$, the functions $g_i$ satisfy the inequality $\sum\limits_{i\in\mcS(t)} g_i(T_i(t))\geq \Delta(\mcS(t),\bmu)$. Finally, summing up the $g_i(T_i(t))$ functions over time and the set of arms, following a similar procedure to~\citep{wang2022thompson}, we obtain that 
\begin{align}
    A_2\;\leq\; 2m\Delta_{\max}(\bmu) + \sum\limits_{i\in[m]} \left ( 2\log K + 6\right )\frac{B\log(2^m|\mcI|T)}{\min\limits_{\mcS : i\in\mcS}\big (\frac{\Delta(\mcS,\bmu)}{B} - \frac{\Delta_{\min}(\bmu)}{2B} - (K^2+2)\varepsilon\big)}\ .
\end{align}
\paragraph{Upper-bounding $A_3$:}

Finally, we turn our attention to upper-bounding $A_3$. Before analyzing the upper bound, let us lay down a few notations and definitions required in the analysis. Let $\btheta,\;\bar{\btheta}\in[0,1]^m$ and $\mcZ\subseteq[m]$. Accordingly, let us define $\btheta^\prime:=(\bar{\btheta}_{\mcZ}, \btheta_{\bar{\mcZ}})$ as a vector, whose $i^{\sf th}$ coordinate has the same value as the $i^{\sf th}$ coordinate of $\bar{\btheta}$ if $i\in\mcZ$, and otherwise, it has the same value as the $i^{\sf th}$ coordinate of $\btheta$. Let $\mcS^\star_Q(\bmu)\in\mcT_Q(\bmu)$ denote one of the optimal super-arms with respect to the quantized reward function. Furthermore, for any choice of $\bar\btheta$ and $\mcZ$ such that $||\bar\btheta_{\mcZ} - \mu_{\mcZ}||_{\infty} \leq \varepsilon$, let us consider the following properties of the vector $\btheta^\prime$.
\begin{itemize}
    \item[P1.] $\mcZ\;\subseteq\;\mcS_Q^\star(\btheta^\prime)$
    \item[P2.] Either $\mcS_Q^\star(\btheta^\prime)\in\mcT_Q(\bmu)$, or $||\btheta^\prime_{\mcS_Q^\star(\btheta^\prime)} - \bmu_{\mcS_Q^\star(\btheta^\prime)}||_1 > \frac{1}{B}\Delta(\mcS_Q^\star(\btheta^\prime),\bmu) - \frac{1}{2B}\Delta_{\min}(\bmu) - (K^2+1)\varepsilon$,
\end{itemize}
Furthermore, for any $\mcZ\subseteq[m]$ and $\btheta,\bar\btheta\in[0,1]^m$ satisfying $||\bar\btheta_{\mcZ} - \bmu_{\mcZ}||_{\infty}\leq \varepsilon$, let us define the event
\begin{align}
    \mcE_{\mcZ,1}(\btheta)\;:=\; \Big \{ \text{properties P1 and P2 hold for }\mcZ\subseteq[m]\text{ and }\btheta\in[0,1]^m\Big \}\ .
\end{align}
Additionally, let us define the event
\begin{align}
    \mcM(t)\;:=\; \Big \{ \mcS(t) \neq \mcS^\star(\btheta(t))\Big \}\  .
\end{align}
For upper-bounding $A_3$, we decompose the event $\mcA(t)\cap\bar\mcC(t)$ as follows.
\begin{align}
    \mcA(t)\cap\overline{\mcC(t)}\;=\; \mcA(t)\cap\overline{\mcC(t)}\cap\mcM(t)\;\;\cup\;\;\mcA(t)\cap\overline{\mcC(t)}\cap\overline{mcM(t)}\ .
\end{align}
Leveraging Lemma~\ref{lemma: number of tests}, we have that at any time $t\in\N$, $\P(\mcM(t))\leq\frac{1}{t^2}$. Hence, we have
\begin{align}
\label{eq: A3_T1}
    \sum\limits_{t=1}^T \E\left [ \mathds{1}\{\mcM(t)\}\times\Delta(\mcS(t),\bmu)\right]\; &<\; \Delta_{\max}(\bmu)\sum\limits_{t=1}^\infty \P(\mcM(t))\\
    &=\; \frac{\pi^2}{6}\Delta_{\max}(\bmu)\ .
\end{align}
Next, we upper-bound the regret due to $A_3$ under the event $\overline{\mcM(t)}$, i.e., when the GTO returns the same super-arm as the exact oracle. We emphasize that under the event $\mcA(t)\cap\overline{\mcC(t)}\cap\overline{\mcM(t)}$, the analysis does not reduce to the analysis of the CTS algorithm~\cite{wang2022thompson}, since, an exact oracle operates on the {\em true} reward function $r(\cdot\;;\;\cdot)$, whereas the GTO operates on the quantized reward function $Q(r(\cdot\;;\;\cdot))$. Next, we prove that if the event $\mcA(t)\cap\overline{\mcC(t)}\cap\overline{\mcM(t)}$ occurs, then it implies that there exists a set $\mcZ\subseteq\mcS^\star_Q(\bmu)$ such that the event $\mcE_{\mcZ,1}(\btheta(t))$ occurs. Before formally proving this statement, let us understand its implication. If there exists $\mcZ\subseteq\mcS^\star_Q(\bmu)$, $\mcZ\neq\emptyset$, such that $\mcE_{\mcZ,1}(\btheta(t))$ occurs, then it immediately implies that $||\btheta_{\mcZ}(t) - \bmu_{\mcZ} ||_{\infty} > \varepsilon$. This is because, if $||\btheta_{\mcZ}(t) - \bmu_{\mcZ} ||_{\infty} \leq \varepsilon$, then $\btheta(t)$ becomes a candidate choice for $\btheta^\prime$, and thus, either 1) $\mcS(t)\in\mcT_Q(\bmu)$, (hence contradicting the event $\mcA(t)$) or 2) $||\btheta_{\mcS(t)}(t) - \bmu_{\mcS(t)} ||_1 > \frac{1}{B}\Delta(\mcS(t),\bmu) - \frac{1}{2B}\Delta_{\min}(\bmu) - (K^2+1)\varepsilon$ (hence, contradicting the event $\overline{\mcC(t)}$). Subsequently, we can leverage ~\cite[Lemma 3]{wang2022thompson} which provides an upper bound on the number of times that the event $\mcE_{\mcZ,2}(t) := \{||\btheta_{\mcZ}(t) - \bmu_[\mcZ] ||_{\infty}>\varepsilon\}$ occurs.
\begin{lemma}[\cite{wang2022thompson}]
    We have
    \begin{align}
    \label{eq: A3_T2}
        \sum\limits_{t=1}^T \E\left[\mathds{1}\{\mcA(t)\cap\overline{\mcC(t)}\cap\overline{\mcM(t)}\cap\mcE_{\mcZ,2}(t)\}\right]\;\leq\;13\alpha\frac{8}{\varepsilon^2}\left ( \frac{4}{\varepsilon^2} + 1\right )^{K} \log\frac{K}{\varepsilon^2}\ ,
    \end{align}
    where $\alpha\in\R_+$ is a universal constant.
\end{lemma}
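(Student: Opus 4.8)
The plan is to obtain this bound by adapting \cite[Lemma 3]{wang2022thompson} to the present setting. The key point enabling the adaptation is that the event $\mcE_{\mcZ,2}(t)=\{\|\btheta_{\mcZ}(t)-\bmu_{\mcZ}\|_\infty>\varepsilon\}$ depends only on the Thompson posterior samples and the true means, and that our posterior update~\eqref{eq: posterior1}--\eqref{eq: posterior2} is identical to the one CTS uses. The only structural difference from \cite{wang2022thompson} is the extra intersection with $\overline{\mcM(t)}$, i.e.\ with the event that the GTO returns the exact-oracle super-arm $\mcS^\star(\btheta(t))$; this intersection only shrinks the set of rounds being counted relative to their analysis, and the GTO's internal randomness (the test matrix $\bA$) is drawn independently of $\btheta(t)$ conditionally on $\{(a_i(t),b_i(t))\}_{i\in[m]}$, so it does not interfere with the concentration argument. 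Hence it suffices to reproduce their combinatorial concentration bound for $\sum_{t=1}^T\E[\mathds{1}\{\mcE_{\mcZ,2}(t)\}\cdot(\text{bad-round indicator})]$.

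First I would partition the rounds according to whether the \emph{empirical} means of the arms in $\mcZ$ are accurate. For rounds in which some $i\in\mcZ$ has $|\bar\mu_i(t)-\mu_i|>\varepsilon$, Lemma~\ref{lemma:A1} bounds the expected count by $1+1/\varepsilon^2$ per arm, hence by $O(K/\varepsilon^2)$ over all of $\mcZ$. The substantive case is the rounds in which every arm of $\mcZ$ has an accurate empirical mean and yet at least one has an inaccurate posterior sample. There, conditioning on $\{(a_i(t),b_i(t))\}_{i\in[m]}$ and invoking a Chernoff-type tail bound for the Beta law, once arm $i$ has been pulled $n=T_i(t)$ times with empirical mean within $\varepsilon$ of $\mu_i$ the posterior ${\sf Beta}(a_i(t),b_i(t))$ places mass at most $\exp(-c\,n\varepsilon^2)$ outside an $\varepsilon$-band around $\mu_i$ for a universal constant $c$; summing the resulting geometric series over $n$ contributes $O(1/\varepsilon^2)$ per pull-count configuration.

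The factor $(4/\varepsilon^2+1)^{K}$ then emerges from the bookkeeping needed to make the statement hold simultaneously for all $|\mcZ|\le K$ arms: one takes a union over the joint vector $(T_i(t))_{i\in\mcZ}$ of pull counts, with each coordinate restricted to the $O(1/\varepsilon^2)$ ``relevant'' values beyond which the Beta tail is already negligible, which produces a product over the $|\mcZ|$ coordinates and hence the $(\cdot)^K$ power; the $\log(K/\varepsilon^2)$ term together with the absolute constants $13$ and $\alpha$ come from the remaining harmonic-type summation and a final union over which arm of $\mcZ$ carries the deviating sample.

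The step I expect to be the main obstacle is exactly this counting argument: one must take the union over pull-count configurations against the right $\sigma$-algebra so that the conditional Beta tail bound may be applied without a contribution being double-counted across rounds, and one must confirm that substituting the GTO for the exact oracle of \cite{wang2022thompson} does not break the measurability structure their proof relies on. Because $\mcE_{\mcZ,2}(t)$ and the Beta posterior are untouched both by the quantization and by the GTO's randomness, this obstacle is benign, and \cite[Lemma 3]{wang2022thompson} transfers with the stated constants.
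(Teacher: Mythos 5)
Your proposal is correct and matches the paper's treatment: the paper does not reprove this bound but imports it directly as Lemma~3 of \cite{wang2022thompson}, relying—exactly as you argue—on the facts that the posterior dynamics in~\eqref{eq: posterior1}--\eqref{eq: posterior2} are identical to CTS, that intersecting with $\overline{\mcM(t)}$ only shrinks the counted set, and that the GTO's randomness does not disturb the Beta-concentration argument. Your additional sketch of the internal counting in Wang--Chen's lemma is consistent with where the constants $\left(\frac{4}{\varepsilon^2}+1\right)^K$ and $\log\frac{K}{\varepsilon^2}$ come from, so nothing further is needed.
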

Hence, we obtain that the regret due to $A_3$ is upper-bounded by
\begin{align}
    A_3\;\leq\; \left ( 13\alpha\frac{8}{\varepsilon^2}\left ( \frac{4}{\varepsilon^2} + 1\right )^{K} \log\frac{K}{\varepsilon^2} + \frac{\pi^2}{6}\right )\Delta_{\max}(\bmu)\ .
\end{align}

What remains is to prove the following lemma.
\begin{lemma}
    If the event $\overline{\mcC(t)}\cap\mcA(t)\cap\overline{\mcM(t)}$ happens, then there exists a subset $\mcZ\subseteq \mcS^\star_Q(\bmu)$, $\mcZ\neq\emptyset$, such that $\mcE_{\mcZ,1}(\btheta(t))$ holds.
\end{lemma}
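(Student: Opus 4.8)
The plan is to carry over the exact-oracle construction of \citet{wang2022thompson} to the \emph{quantized} reward $Q\circ r$; the only genuinely new ingredient is a careful accounting of the quantization error, which is precisely what produces the extra $\frac{1}{2B}\Delta_{\min}(\bmu)$ slack carried by the event $\mcC(t)$ and by property P2. First I would use $\overline{\mcM(t)}$ to reduce to the exact-oracle picture: under $\overline{\mcM(t)}$ the GTO returns the super-arm that an exact oracle would return for the quantized scores at the posterior sample (which is what Algorithm~\ref{alg:GTO} targets), so $\mcS(t)=\mcS^\star_Q(\btheta(t))$; together with $\mcA(t)$ this gives $\mcS^\star_Q(\btheta(t))\notin\mcT_Q(\bmu)$, and with $\overline{\mcC(t)}$ it gives $\|\btheta_{\mcS(t)}(t)-\bmu_{\mcS(t)}\|_1\le \frac{1}{B}\Delta(\mcS(t),\bmu)-\frac{1}{2B}\Delta_{\min}(\bmu)-(K^2+2)\varepsilon$, which in particular forces $\Delta(\mcS(t),\bmu)>0$. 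I would also record the warm-up observation that $\mcZ=\emptyset$ cannot satisfy $\mcE_{\mcZ,1}(\btheta(t))$ under this event: for $\mcZ=\emptyset$ one has $\btheta'=\btheta(t)$, so P1 is vacuous, while P2 fails because its first alternative is ruled out by $\mcA(t)$ and its second by $\overline{\mcC(t)}$, whose bound is strictly tighter ($(K^2+2)\varepsilon$ versus $(K^2+1)\varepsilon$). Consequently any $\mcZ$ that we exhibit with P1 and P2 is automatically nonempty, and it suffices to produce \emph{some} $\mcZ\subseteq\mcS^\star_Q(\bmu)$ for which P1 and P2 hold for every admissible $\bar\btheta$.

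I would build $\mcZ$ by a terminating iteration. Set $\mcZ_0=\emptyset$, which trivially satisfies P1 for all $\bar\btheta$. Given $\mcZ_k\subseteq\mcS^\star_Q(\bmu)$ satisfying P1 for all admissible $\bar\btheta$: if it also satisfies P2 for all such $\bar\btheta$, stop and set $\mcZ=\mcZ_k$; otherwise fix an admissible $\bar\btheta^{(k)}$ witnessing the failure, write $\btheta^{(k)}:=(\bar\btheta^{(k)}_{\mcZ_k},\btheta(t)_{[m]\setminus\mcZ_k})$ and $\mcS^\prime:=\mcS^\star_Q(\btheta^{(k)})$, so that $\mcS^\prime\notin\mcT_Q(\bmu)$ and $\|\btheta^{(k)}_{\mcS^\prime}-\bmu_{\mcS^\prime}\|_1\le \frac{1}{B}\Delta(\mcS^\prime,\bmu)-\frac{1}{2B}\Delta_{\min}(\bmu)-(K^2+1)\varepsilon$. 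The crux is to show that there is then an arm $s\in\mcS^\star_Q(\bmu)\setminus\mcZ_k$ such that $\mcZ_{k+1}:=\mcZ_k\cup\{s\}$ still satisfies P1 for all admissible $\bar\btheta$. Intuitively such $s$ exists because $\mcS^\prime$ is close to $\bmu$ on its own arms yet suboptimal for $\bmu$, so the reward deficit cannot be absorbed by the $\mcZ_k$-coordinates (which are within $\varepsilon$ of $\bmu$) and must be charged to some true-optimal coordinate outside $\mcZ_k$ on which $\btheta(t)=\btheta^{(k)}$ badly underestimates $\bmu$; taking $s$ to be the worst such coordinate, the separability Assumption~\ref{assumption: C separability} together with monotonicity Assumption~\ref{assumption: monotone} guarantees that moving $s$ toward $\bmu$ re-includes it in the quantized-optimal super-arm and does not evict the earlier arms of $\mcZ_k$ (augmenting a set by a genuinely optimal arm strictly beats augmenting by a non-optimal one, and this strict ordering survives quantization on the non-repeated tests with margin at least $\Delta/2B$, exactly as in the proof of Lemma~\ref{lemma: number of tests}). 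Since each step adds a fresh element of $\mcS^\star_Q(\bmu)$, the iteration halts after at most $K$ rounds with a set $\mcZ$ satisfying P1 and P2 for all admissible $\bar\btheta$, i.e.\ $\mcE_{\mcZ,1}(\btheta(t))$ holds; by the warm-up observation $\mcZ\neq\emptyset$, which completes the proof.

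The step I expect to be the main obstacle is establishing the ``add an arm'' claim, and inside it the control of the quantization. The quantitative heart is the chain $Q(r(\mcS^\prime;\btheta^{(k)}))\ge Q(r(\mcS^\star_Q(\bmu);\btheta^{(k)}))$ (optimality of $\mcS^\prime$ at $\btheta^{(k)}$); passing from $Q\circ r$ back to $r$ on each side costs at most half a bin width, $\le \Delta/4B$, and $\Delta=\mathbb{F}_{\bmu}^{-1}(\gamma)\le\Delta_{\min}(\bmu)$ holds with probability at least $1-\gamma$ by Assumption~\ref{assumption:Delta_min} (cf.\ the proof of Lemma~\ref{lemma:optimal_quantized}), so the two half-bins together supply the $\frac{1}{2B}\Delta_{\min}(\bmu)$ term; the Lipschitz Assumption~\ref{assumption:Lipschitz} then converts the remaining reward gaps into the $\ell_1$ distances $\|\btheta^{(k)}_{\mcS^\prime}-\bmu_{\mcS^\prime}\|_1$ and $\|\bar\btheta^{(k)}_{\mcZ_k}-\bmu_{\mcZ_k}\|_1\le K\varepsilon$, with the pairings of at most $K$ arms against at most $K$ super-arms accounting for the $K^2\varepsilon$ budget and the deliberate $(K^2+1)\varepsilon$-versus-$(K^2+2)\varepsilon$ gap leaving exactly the headroom used in the downstream contradiction. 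Keeping all of these error terms within their budgets \emph{simultaneously} while still retaining enough separation from Assumption~\ref{assumption: C separability} to force the re-inclusion of $s$ is the delicate part; the remainder is bookkeeping along the lines of \citet{wang2022thompson} with $r$ replaced throughout by $Q\circ r$.
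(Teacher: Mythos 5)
Your overall strategy identifies the right ingredients --- reduce to the exact-oracle picture via $\overline{\mcM(t)}$, budget the two quantization half-bins against $\frac{1}{2B}\Delta_{\min}(\bmu)$, and convert reward gaps to $\ell_1$ distances via Assumption~\ref{assumption:Lipschitz} with the $(K^2+1)\varepsilon$-versus-$(K^2+2)\varepsilon$ slack --- and your warm-up observation that $\mcZ=\emptyset$ cannot satisfy $\mcE_{\mcZ,1}(\btheta(t))$ under the event is correct. But the combinatorial core of your argument, the ``add an arm'' step, is a genuine gap rather than a delicate detail. You must show: if $\mcZ_k$ satisfies P1 for \emph{every} admissible $\bar\btheta$ and P2 fails for some witness, then some $s\in\mcS^\star_Q(\bmu)\setminus\mcZ_k$ can be appended so that $\mcZ_k\cup\{s\}$ \emph{still} satisfies P1 for every admissible $\bar\btheta$. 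Nothing in Assumptions~\ref{assumption: monotone} or~\ref{assumption: C separability} delivers this: separability compares augmenting a fixed set by an optimal versus a non-optimal arm \emph{at a fixed parameter}, and says nothing about whether an arm whose coordinate is moved to within $\varepsilon$ of $\mu_s$ lands in $\mcS^\star_Q(\btheta^\prime)$ for the perturbed $\btheta^\prime$, nor about whether the previously inserted arms of $\mcZ_k$ survive that perturbation. You also leave the terminal case open: if the loop reaches $\mcZ_K=\mcS^\star_Q(\bmu)$ with P1 intact but P2 still failing for some $\bar\btheta$, there is no fresh arm to add, and you never rule this out.

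The paper runs the construction in the opposite direction, which supplies exactly the mechanism your increasing construction lacks. It sets $\mcZ=\mcS^\star_Q(\bmu)$ at the outset and proves, via the chain $Q(r(\mcS^\prime;\btheta^\prime))\leq Q(r(\mcS(t);\btheta(t)))\leq r(\mcS(t);\btheta(t))+\frac{\Delta_{\min}(\bmu)}{4B}\leq\cdots< Q(r(\mcS^\star_Q(\bmu);\btheta^\prime))$ (using $\overline{\mcC(t)}$, Lemma~\ref{lemma:optimal_quantized}, and Assumption~\ref{assumption:Lipschitz}), that $\mcS^\star_Q(\btheta^\prime)$ cannot be disjoint from $\mcS^\star_Q(\bmu)$. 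Then either $\mcS^\star_Q(\bmu)\subseteq\mcS^\star_Q(\btheta^\prime)$, in which case P1 holds by construction and P2 is verified by a direct Lipschitz computation, or $\mcZ_1:=\mcS^\star_Q(\btheta^\prime)\cap\mcS^\star_Q(\bmu)$ is a nonempty proper subset and one recurses on $\mcZ_1$ as in Lemma~2 of \citet{wang2022thompson}. In that \emph{decreasing} recursion the candidate $\mcZ$ is always obtained by intersecting with a witness's optimal set, so the containment demanded by P1 comes for free from the construction instead of having to be preserved under insertion. As written, your inductive step is asserted, not proved; I would rework the argument along the paper's decreasing-intersection lines.
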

\begin{proof}
    First, let us set $\mcZ=\mcS^\star_Q(\bmu)$. Accordingly, we define the vector $\btheta^\prime$ such that $||\btheta^\prime_{\mcS^\star_Q(\bmu)} - \bmu_{\mcS^\star_Q(\bmu)} ||_{\infty}\leq \varepsilon$. We will show that for any $\mcS^\prime$ such that $\mcS^\prime\cap\mcS^\star_Q(\bmu)=\emptyset$, $\mcS^\star_Q(\btheta^\prime)\neq \mcS^\prime$. To verify this, note that
    \begin{align}
        Q(r(\mcS^\prime\;;\;\btheta^\prime))\;&=\; Q(r(\mcS^\prime\;;\;\btheta(t)))\\
        &\leq\; Q(r(\mcS(t)\;;\;\btheta(t)))\\
        &\stackrel{\eqref{eq: quantizer}}{\leq}\; r(\mcS(t)\;;\;\btheta(t)) + \frac{\Delta_{\min}(\bmu)}{4B}\\
        &\stackrel{\overline{\mcC(t)}}{\leq}\; r(\mcS(t)\;;\;\bmu) + \Delta(\mcS(t),\bmu) - B(K^2+1)\varepsilon - \frac{\Delta_{\min}(\bmu)}{4B}\\
        \label{eq:ez1_0}
        &<\;r(\mcS^\star_Q(\bmu)\;;\;\bmu) - BK\varepsilon - \frac{\Delta_{\min}(\bmu)}{4B}\\
        \label{eq:ez1_1}
        &\leq\; r(\mcS^\star_Q(\bmu)\;;\;\btheta^\prime) + BK\varepsilon - BK\varepsilon - \frac{\Delta_{\min}(\bmu)}{4B}\\
        \label{eq:ez1_2}
        &\stackrel{\eqref{eq: quantizer}}{\leq}\; Q(r(\mcS^\star_Q(\bmu)\;;\;\btheta^\prime))\ ,
    \end{align}
    where~\eqref{eq:ez1_0} is a consequence of Lemma~\ref{lemma:optimal_quantized} and~\eqref{eq:ez1_1} follows from Assumption~\ref{assumption:Lipschitz}. Hence, from~\eqref{eq:ez1_2} we conclude that $\mcS^\prime\neq\mcS^\star_Q(\btheta^\prime)$. So, we have two possibilities for $\mcS^\star_Q(\btheta^\prime)$.
    \begin{itemize}
        \item[a)] $\mcS^\star_Q(\bmu)\subseteq\mcS^\star_Q(\btheta^\prime)$.
        \item[b)] Let us define $\mcZ_1 := \mcS^\star_Q(\btheta^\prime)\cap\mcS^\star_Q(\bmu)$. Then, we have $\mcZ_1\neq\emptyset$.
    \end{itemize}
    For the case (a), if $\mcS^\star_Q(\btheta^\prime)\notin \mcT(\bmu)$, we have
    \begin{align}
        r(\mcS^\star_Q(\btheta^\prime)\;;\;\btheta^\prime)\;&>\; r(\mcS^\star_Q(\btheta^\prime)\;;\;\bmu) - BK\varepsilon\\
        &\geq r(\mcS^\star_Q(\bmu)\;;\;\bmu) + \Delta(\mcS^\star_Q(\btheta^\prime),\bmu) - BK\varepsilon\ ,
    \end{align}
    which, along with Assumption~\ref{assumption:Lipschitz}, implies that
    \begin{align}
        ||\btheta^\prime_{\mcS^\star_Q(\btheta^\prime)} -\bmu_{\mcS^\star_Q(\btheta^\prime)}||_1\;>\;\frac{\Delta(\mcS^\star_Q(\btheta^\prime),\bmu)}{B} - K\varepsilon\ ,
    \end{align}
    which implies that $\mcE_{\mcZ,1}(\btheta(t))$ holds with $\mcZ = \mcS^\star_Q(\bmu)$. Otherwise, for case (b), we follow the same set of arguments as in~\citep[Lemma 2]{wang2022thompson}, which concludes the proof.
\end{proof}
Finally, Theorem~\ref{theorem: regret upper bound} is obtained by adding the upper-bounds obtained due to the terms $A_1$, $A_2$ and $A_3$.

\section{Artificial Neural Network (ANN)}
\label{proof: ANN}

In this section, we show that a $2$-layer ANN with sigmoid activation satisfies the separability condition in Assumption~\eqref{assumption: C separability}, with some conditions on the weights. Specifically, consider a $2$-layer ANN with the hidden layer weight matrix denoted by $\bW_1$ and the output weights denoted by the vector $\bw_2$, i.e., for any input $\btheta_{\mcS}\in[0,1]^m$, the output of the neural network is given by
\begin{align}
    r(\mcS\;;\;\btheta)\;:=\;\left\langle \bw_2\;,\;\sigma\Big( \bW_1\btheta_{\mcS}\Big)\right\rangle\ ,
\end{align}
where $\sigma(x):=\frac{1}{1+e^{-x}}$ denotes the sigmoid activation function. The result is formally defined next.

\begin{theorem}
\label{theorem: ANN}
    Any $2$-layer ANN with the hidden layer $\bW_1$ and output weights $\bw_2$ is separable, i.e., for any $s\in\mcS^\star(\btheta)$ and $\tilde{s}\notin\mcS^\star(\btheta)$, and for any $\mcS\subseteq[m]\setminus\{s,\tilde{s}\}$, we have
    \begin{align}
        r(\mcS\cup\{s\}\;;\;\btheta) - r(\mcS\cup\{\tilde{s}\}\;;\;\btheta)\;>\;0\ ,
    \end{align}
    for any $\btheta\in[0,1]^m$.
\end{theorem}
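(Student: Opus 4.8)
The plan is to exploit the nonnegativity of the network weights (as in the experiments, $\bW_1$ and $\bw_2$ are taken entrywise nonnegative) together with a mild structural condition on $\bW_1$ under which the reward collapses to a strictly increasing transform of a nonnegative linear functional of $\btheta_{\mcS}$. First I would record a monotonicity fact: for any $\btheta\in[0,1]^m$ the hidden preactivations $(\bW_1\btheta_{\mcS})_j=\sum_{i\in\mcS}W_{1,ji}\theta_i$ are nonnegative and nondecreasing in $\mcS$, and since $\sigma$ is increasing and $\bw_2$ is nonnegative, $r(\mcS\;;\;\btheta)$ is nondecreasing in $\mcS$. Hence the cardinality constraint is active, $|\mcS^\star(\btheta)|=K$, and for $s\in\mcS^\star(\btheta)$ and $\tilde s\notin\mcS^\star(\btheta)$ the exchange inequality $r(\mcS^\star(\btheta)\;;\;\btheta)\ge r((\mcS^\star(\btheta)\setminus\{s\})\cup\{\tilde s\}\;;\;\btheta)$ already yields separability for the single context $\mcS=\mcS^\star(\btheta)\setminus\{s\}$; the work is to transport it to an arbitrary $\mcS$.

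The main reduction is to impose that the columns of $\bW_1$ are proportional, i.e.\ $W_{1,ji}=u_j v_i$ with $u_j,v_i\ge 0$ (this includes the single-hidden-neuron case and is the ``condition on the weights'' alluded to in the statement). Then $(\bW_1\btheta_{\mcS})_j=u_j\, g(\mcS)$ with $g(\mcS):=\sum_{i\in\mcS}v_i\theta_i$, so
\[
  r(\mcS\;;\;\btheta)=\sum_j w_{2,j}\,\sigma\big(u_j\, g(\mcS)\big)=:F\big(g(\mcS)\big),
\]
and $F$ is strictly increasing on $[0,\infty)$ whenever $w_{2,j}u_j>0$ for some $j$. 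Therefore $\mcS^\star(\btheta)$ consists of the $K$ arms with the largest $v_i\theta_i$, so $v_s\theta_s\ge v_{\tilde s}\theta_{\tilde s}$; the inequality is strict, because equality would permit swapping $\tilde s$ into an optimal set, contradicting $\tilde s$ lying outside the union of optimal arms (the footnote convention in Assumption~\ref{assumption: C separability}). Consequently, for any $\mcS\subseteq[m]\setminus\{s,\tilde s\}$ we have $g(\mcS\cup\{s\})=g(\mcS)+v_s\theta_s>g(\mcS)+v_{\tilde s}\theta_{\tilde s}=g(\mcS\cup\{\tilde s\})$, and applying the strictly increasing $F$ gives $r(\mcS\cup\{s\}\;;\;\btheta)>r(\mcS\cup\{\tilde s\}\;;\;\btheta)$, which is the claim.

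The hardest part is the nonlinearity of $\sigma$ once $\bW_1$ is not of this proportional form: a context $\mcS$ can saturate exactly the hidden units on which $s$ outscores $\tilde s$, so the per-context ordering of arms need not be consistent and optimality of $\mcS^\star(\btheta)$ no longer transfers. A more general sufficient condition that still goes through is \emph{neuron-wise domination}, $W_{1,js}\theta_s\ge W_{1,j\tilde s}\theta_{\tilde s}$ for every $j$ whenever $s$ is optimal and $\tilde s$ is not: then each summand $w_{2,j}\big[\sigma(a_j+W_{1,js}\theta_s)-\sigma(a_j+W_{1,j\tilde s}\theta_{\tilde s})\big]$ is nonnegative for any $\mcS$ (with $a_j:=(\bW_1\btheta_{\mcS})_j\ge 0$), and strictness follows from one neuron with a strict inequality and positive $w_{2,j}$. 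Verifying that such a domination — equivalently, a $\btheta$-dependent total order on arms consistent across all contexts — is forced by the weight structure one wishes to allow is the remaining technical obstacle, and it is precisely what necessitates the ``conditions on the weights'' in the statement.
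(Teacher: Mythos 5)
Your proposal does not establish the theorem as stated: you prove separability only under additional structural hypotheses on the weights --- either a rank-one factorization $\bW_{1,n,i}=u_n v_i$ of the hidden layer, or the neuron-wise domination condition $\bW_{1,n,s}\theta_s\ge \bW_{1,n,\tilde s}\theta_{\tilde s}$ for every hidden unit $n$ --- and you explicitly concede that deriving such a context-independent per-neuron ordering from the optimality of $s$ over $\tilde s$ is an unresolved obstacle. Since the theorem claims separability for \emph{any} two-layer network (the surrounding text only vaguely alludes to ``some conditions on the weights''), this is a genuine gap relative to the statement. Within the restricted settings you do treat, your argument is sound: under the rank-one structure the reward collapses to a strictly increasing function of $g(\mcS)=\sum_{i\in\mcS}v_i\theta_i$, the optimal super-arm is the top-$K$ set by $v_i\theta_i$, and separability is immediate; under neuron-wise domination each summand of the reward difference is nonnegative termwise.

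It is worth noting, however, that the obstacle you isolate is exactly the right one, and the paper's own proof does not overcome it either. The paper factors the $n$th coordinate of $\sigma(\bW_1\btheta_{\mcS\cup\{s\}})-\sigma(\bW_1\btheta_{\mcS\cup\{\tilde s\}})$ as $\alpha(\mcS;n)\,\delta(s,\tilde s,n)$ with $\alpha(\mcS;n)>0$ and $\delta(s,\tilde s,n)=e^{-\theta_{\tilde s}\bW_{1,n,\tilde s}}-e^{-\theta_s\bW_{1,n,s}}$, anchors the sum at $\mcS'=\mcS^\star(\btheta)\setminus\{s\}$ where it is at least $\Delta_{\min}(\btheta)>0$, and then lower-bounds the sum for an arbitrary $\mcS$ by $\beta:=\min_n\min_{\mcS}\alpha(\mcS;n)/\alpha(\mcS';n)$ times the anchored sum. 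That final step, $\sum_n c_n\gamma_n\ge\beta\sum_n c_n$ with $\gamma_n\ge\beta>0$, is valid only when every summand $c_n=w_{2,n}\,\alpha(\mcS';n)\,\delta(s,\tilde s,n)$ is nonnegative; for negative summands the inequality reverses. Given $w_{2,n}\ge 0$, this is precisely the requirement $\delta(s,\tilde s,n)\ge 0$ for all $n$, i.e.\ your neuron-wise domination. So your rank-one case is a clean instance in which that sign condition holds automatically, and your diagnosis of where the general claim breaks is accurate; what is missing --- in your write-up by admission, and in the paper silently --- is an argument that the per-neuron ordering of $s$ over $\tilde s$ is forced by the optimality of $\mcS^\star(\btheta)$ for general nonnegative weights, which it is not, since $\delta(s,\tilde s,n)$ can change sign across neurons while the aggregate at $\mcS'$ remains positive.
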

\begin{proof}
    Let us denote the number of neurons in the hidden layer by $N$. The difference in rewards for any $\btheta\in[0,1]^m$ and sets $\mcS,\mcS^\prime\subseteq[0,1]^m\setminus\{s,\tilde{s}\}$ can be expanded as
    \begin{align}
        &r(\mcS\cup\{s\}\;;\;\btheta) - r(\mcS\cup\{\tilde{s}\}\;;\;\btheta)\nonumber\\
        &\;\;=\;\left\langle \bw_2\;,\;\underbrace{\sigma \Big( \bW_1,\btheta_{\mcS\cup\{s\}}\Big ) - \sigma \Big( \bW_1,\btheta_{\mcS\cup\{\tilde{s}\}}\Big )}_{:=\by}\right\rangle\\
        &\;\;=\;\sum\limits_{n=1}^N w_{2,n} y_n\ ,
        \label{eq: diff_reward}
    \end{align}
    where $w_{2,n}$ and $y_n$ denote the $n^{\sf th}$ coordinates of the vector $\bw_2$ and $\by$ for any $n\in[N]$. Furthermore, for any set $\mcS\subseteq[m]$, the $n^{\sf th}$ coordinate of the vector $\bv:=\sigma(\bW_1\btheta_{\mcS})$ is given by
    \begin{align}
        v_n\;=\; \frac{\exp\Big( -\sum\limits_{i\in\mcS}\theta_i\bW_{1,n,i}\Big )}{1+ \exp\Big( -\sum\limits_{i\in\mcS}\theta_i\bW_{1,n,i}\Big )}\ .
    \end{align}
    Accordingly, we have for any $n\in[N]$, 
    \begin{align}
        y_n\;=\; \frac{\left ( \exp\Big( -\sum\limits_{i\in\mcS}\theta_i\bW_{1,n,i}\Big )\right )\Big (\exp(-\theta_{\tilde{s}}\bW_{1.n,\tilde{s}}-\exp(-\theta_{s}\bW_{1,n,s})) \Big )}{\left (1+\exp\Big( -\sum\limits_{i\in\mcS\cup\{s\}}\theta_i\bW_{1,n,i}\Big )\right )\left (1+\exp\Big( -\sum\limits_{i\in\mcS\cup\{\tilde{s}\}}\theta_i\bW_{1,n,i}\Big )\right )}\ .
    \end{align}
    Next, let us define the quantities
    \begin{align}
    \label{eq:def_delta}
        \delta(s,\tilde{s},n)\;:=\;\Big (\exp(-\theta_{\tilde{s}}\bW_{1.n,\tilde{s}}-\exp(-\theta_{s}\bW_{1,n,s})) \Big )\ ,
    \end{align}
    and for any set $\mcS\subseteq[m]\setminus\{s,\tilde{s}\}$,
    \begin{align}
    \label{eq:def_alpha}
        \alpha(\mcS\;;\;n)\;:=\; \frac{\left ( \exp\Big( -\sum\limits_{i\in\mcS}\theta_i\bW_{1,n,i}\Big )\right )}{\left (1+\exp\Big( -\sum\limits_{i\in\mcS\cup\{s\}}\theta_i\bW_{1,n,i}\Big )\right )\left (1+\exp\Big( -\sum\limits_{i\in\mcS\cup\{\tilde{s}\}}\theta_i\bW_{1,n,i}\Big )\right )}\ .
    \end{align}
    Leveraging~\eqref{eq:def_delta} and~\eqref{eq:def_alpha}, we can write~\eqref{eq: diff_reward} as
    \begin{align}
        r(\mcS\cup\{s\}\;;\;\btheta) - r(\mcS\cup\{\tilde{s}\}\;;\;\btheta)\;=\; \sum\limits_{n=1}^N w_{2,n}\times\alpha(\mcS\;;\;n)\times\delta(s,\tilde{s},n)\ .
    \end{align}
    Next, let us set $\mcS = \mcS^\prime :=\mcS^\star(\btheta)\setminus\{s\}$. Accordingly, we have that
    \begin{align}
        r(\mcS^\star(\btheta)\;;\;\btheta) - r(\mcS^\prime\cup\{\tilde{s}\}\;;\;\btheta) \;&=\; \sum\limits_{n=1}^N w_{2,n}\times\alpha(\mcS^\prime\;;\;n)\times\delta(s,\tilde{s},n)\\
        &\geq\;\Delta_{\min}(\btheta)\ .
        \label{eq: ANN_0}
    \end{align}
    Furthermore, for any set $\mcS\subseteq[m]\setminus\{s,\tilde{s}\}$ we have
    \begin{align}
        r(\mcS\cup\{s\}\;;\;\btheta) - r(\mcS\cup\{\tilde{s}\}\;;\;\btheta)\;&=\; \sum\limits_{n=1}^N w_{2,n}\times\alpha(\mcS\;;\;n)\times\delta(s,\tilde{s},n)\\
        &=\;\sum\limits_{n=1}^N w_{2,n}\times\delta(s,\tilde{s},n)\times \alpha(\mcS^\prime\;;\;n)\times\frac{\alpha(\mcS\;;\;n)}{\alpha(\mcS^\prime\;;\;n)}\ .
        \label{eq: ANN_1}
    \end{align}
    Defining $\beta:=\min_{n\in[N]}\min_{\mcS\subseteq[m]\setminus\{s,\tilde{s}\}}\frac{\alpha(\mcS\;;\;n)}{\alpha(\mcS^\prime\;;\;n)}$, we note that $\beta\in\R_+$, since $\alpha(\mcS\;;\;n)\in\R_+$ for every $\mcS\in[m]\setminus\{s,\tilde{s}\}$ and $n\in[N]$. Hence, \eqref{eq: ANN_1} can be lower bounded as
    \begin{align}
        r(\mcS\cup\{s\}\;;\;\btheta) - r(\mcS\cup\{\tilde{s}\}\;;\;\btheta)\;&\geq\;\beta\sum\limits_{n=1}^N w_{2,n}\times\alpha(\mcS^\prime\;;\;n)\times\delta(s,\tilde{s},n)\\
        &\stackrel{\eqref{eq: ANN_0}}{\geq}\;\beta\Delta_{\min}(\btheta)\\
        &\;>\;0\ .
    \end{align}
\end{proof}

\vfill

\end{document}